\documentclass[letterpaper, 10 pt, conference]{ieeeconf}
\IEEEoverridecommandlockouts
\usepackage{url}
\usepackage{microtype}
\usepackage{booktabs} 
\usepackage[utf8]{inputenc} 
\usepackage{amsfonts}       
\usepackage{dsfont}
\usepackage{nicefrac}       
\usepackage{microtype}      
\usepackage{array}
\usepackage{xifthen}
\usepackage{graphicx} 
\usepackage{subfigure}
\usepackage[short]{optidef}
\usepackage{amsmath} 
\usepackage{amssymb}
\usepackage{mathtools} 
\usepackage{mathrsfs}   
\usepackage[strict]{changepage}
\usepackage{framed}
\usepackage{paralist}
\usepackage{tikz}
\usetikzlibrary{calc}
\usepackage{pgfplots}
\pgfplotsset{compat=1.11}
\usepgfplotslibrary{fillbetween}
\newcommand\Rademacher[1]{\mathfrak{R}_{#1}}
\newcommand\EmpRademacher[1]{\widehat{\mathfrak{R}}_{#1}}

\newcommand\E[2]{
  {\mathbb{E}_{#1}\big[{#2}\big]}}
\newcommand\Eemp[2]{
  {\mathbb{\widehat{E}}_{#1}\big[{#2}\big]}}
\newcommand\R[3]{
 \ifthenelse{\isempty{#3}}%
  {\mathcal{R}_{#1}\big[{#2}\big]}
  {\mathcal{R}_{#1}^{#3}\big[{#2}\big]}
  }
\newcommand\Remp[2]{
  {\widehat{\mathcal{R}}_{#1}\big[{#2}\big]}}

\newcommand\Opt[4]{
  \ifthenelse{\isempty{#2}}%
  {\mathrm{{#1}}}
  {\underset{\substack{#2}}{\mathrm{{#1}}}}
  \;
  \ifthenelse{\isempty{#4}}%
  {{#3}}
  {\Big\{ {#3}\  \Big| \  {#4} \Big\}}
}
\usepackage{multirow}

\DeclarePairedDelimiter{\norm}{\lVert}{\rVert} 
\setlength\parindent{0pt} 
\newcolumntype{C}[1]{>{\centering\arraybackslash}p{#1}}

\newtheorem{definition}{Definition}[section]

\newtheorem{theorem}{Theorem}[section]
\newtheorem{lemma}{Lemma}[section]
\newtheorem{corollary}{Corollary}[section]
\newtheorem{remark}{Remark}[section]
\graphicspath{{./pics/}}
\allowdisplaybreaks

\title{\LARGE \bf Robust optimization for adversarial learning with finite sample complexity guarantees} 

\author{\normalsize Andr\'e Bertolace$^1$, Konstantinos Gatsis$^2$, Kostas Margellos$^1$
  \thanks{$^1$Department of Engineering Science, University of Oxford, Oxford OX1 3PJ, U.K.
    E-mail:  andre.bertolace@eng.ox.ac.uk, 
    kostas.margellos@eng.ox.ac.uk}
  \thanks{$^2$Department of Engineering Science, University of Southampton, Southampton, UK.
    E-mail: k.gatsis@soton.ac.uk}}

\date{\today} 

\begin{document}

\maketitle 

\begin{abstract}
Decision making and learning in the presence of uncertainty has attracted significant attention in view of the increasing need to achieve robust and reliable operations. In the case where uncertainty stems from the presence of adversarial attacks this need is becoming more prominent. In this paper we focus on linear and nonlinear classification problems and 
propose a novel adversarial training method for robust classifiers, inspired by Support Vector Machine (SVM) margins. We view robustness under a data driven lens, and derive finite sample complexity bounds for both linear and non-linear classifiers in binary and multi-class scenarios. Notably, our bounds match natural classifiers' complexity. Our algorithm minimizes a worst-case surrogate loss using Linear Programming (LP) and Second Order Cone Programming (SOCP) for linear and non-linear models. Numerical experiments on the benchmark MNIST and CIFAR10 datasets show our approach's comparable performance to state-of-the-art methods, without needing adversarial examples during training. Our work offers a comprehensive framework for enhancing binary linear and non-linear classifier robustness, embedding robustness in learning under the presence of adversaries.
\end{abstract}

\section{Introduction}
Decision making and learning in the presence of uncertainty have considered significant attention in recent years, in particular due to the advancements in the machine learning literature that have opened the road for data driven considerations. However, adversaries may manipulate data to compromise model outcomes \cite{NileshDomingos2004}, and as such call for robust solutions. Adversarial attacks, particularly in neural networks \cite{NileshDomingos2004, Szegedy2014, Goodfellow2015, Battista2018, Anirban2018, Ram2019, Yuan2019, Battista2018, BaiTao2021}, have become a significant concern for safety-critical applications like autonomous driving \cite{Kukain2016, Yuan2019}. Various attack methods such as Limited-Memory BFGS \cite{Szegedy2014}, Fast Gradient Sign Method \cite{Goodfellow2015}, and Projected Gradient Descent \cite{Madry2019} have been explored. To address these, research has focused on developing defense mechanisms like defensive distillation \cite{Papernot2016} and feature squeezing \cite{Battista2018}, yet these defenses often lack comprehensive guarantees, highlighting the need for further research towards unifying attacks and defense mechanisms through robust optimization frameworks \cite{Madry2019}. 

The relationship between robust optimization and adversarial machine learning is notably strong. Recent studies have introduced a probabilistic framework that effectively balances average and worst-case scenarios \cite{Robey2022}. This framework also has ties to research on the so called scenario approach \cite{Book_CampiGaratti2018}, particularly in its applications to Support Vector Machines (SVM) \cite{CampiGaratti2008, GarattiCampi2019, CampiGaratti2021} and learning in general \cite{Margellos2014, Campi2023}. Furthermore, the Lipschitz constant for deep neural networks (DNNs) has emerged as a valuable tool in  certifying the robustness of classifiers and analyzing the stability of systems equipped with reinforcement learning controllers \cite{Pauli2022, Mahyar2019}.

\subsection{Our methodology and contribution}
In this paper, we present a novel adversarial training method inspired by SVM  \cite{CortesVapnik1995} margin concepts for binary and multi-class linear and non-linear classifiers. Unlike prior approaches, we analyze manipulations through classifier margins. Our contributions:

\begin{compactitem}
\item Establishing sample complexity bounds within a probably approximately correct (PAC)-learning framework for robust classifiers, leveraging input and parameter space norms. Notably, linear classifiers' sample complexity scales as $m \sim \mathcal{O}(\frac{1}{\epsilon^2}\log{\frac{2}{\delta}})$, where $\epsilon$ is a prespecified classsification accuracy level, and $\delta$ denotes the confidence.
\item Introducing a data-driven optimization-based adversarial training procedure using linear programming (LP) for linear models and second-order cone programming (SOCP) for non-linear ones.
\item Validating our approach on MNIST and CIFAR10 datasets, typically used as benchmarks in classification studies, demonstrating comparable performance to state-of-the-art methods achieving (probabilistic) robustness without the need to generate adversarial examples during training, thus reducing computational effort. 
\end{compactitem}

Our work offers a comprehensive framework for robustness enhancement, eliminating the need for fine-tuning penalization coefficients and specific adversarial examples.

\subsection{Related work}
Our sample complexity bounds match those in\cite{Attias2019} but without assuming adversary tampering per input. They achieve $\mathcal{O}(\frac{1}{\epsilon^2}(k \log(k) \text{VC}(\mathcal{H}) + \log\frac{1}{\delta}))$ using a zero-sum game framework extended to multi-class and real-valued cases. Another work \cite{DongBarlett2019} shows that adversarial Rademacher complexity for binary linear classifiers is never smaller than natural Rademacher complexity, consistent with our findings. Another result, \cite{Bhattacharjee2021} achieved $\mathcal{O}(\frac{1}{m})$ expected standard loss for linear classifiers under separable data assumptions, whereas our approach adds flexibility by accommodating real-world datasets often not meeting separability assumptions. And \cite{Ashtiani2023} studied tolerant adversarial PAC-learning with a larger perturbation radius, deriving sample complexity bounds based on VC-dimension.

On the algorithmic side, \cite{Madry2019} unified attacks and defenses through robust optimization, shaping adversarial machine learning. TRADES \cite{TRADES_ZhangElGhaoui2019} proposed a method to trade adversarial robustness for accuracy by leveraging natural error and boundary error decomposition. SMART \cite{SMART_WangZou2020} introduces a technique considering misclassification and differentiating between misclassified and correctly classified examples during training.

\section{Learning in the presence of an adversary}

Let $\big(\Omega, \mathscr{F}, \mathcal{P} \big)$ be a probability space, and $\omega: \big( \Omega, \mathscr{F} \big) \rightarrow \big( Z, \mathscr{Z} \big)$ be a measurable mapping. Note that since $\omega$ is measurable we can define the image probability measure of $\mathcal{P}$ through $\omega$, defined over the Borel $\sigma$-algebra on $Z$, as
\[
 \mathbb{P}(z) = \big(\mathcal{P} \circ \omega^{-1}\big) (z) = \mathcal{P}\big( \omega^{-1}(z) \big), \forall z
  \in \mathscr{Z}.
  \label{eqn:Pomega}
\]
In addition, let $\mathcal{H}$ be a class of hypotheses or models. Each hypothesis $h \in \mathcal{H}$ is a function mapping $ X \rightarrow Y$, where $X$ represents the domain of features and $Y$ the domain of response variables. In classification context such a hypothesis can be simply termed as classifier. 

We are concerned with the learning problem, in which the learner aims at finding the best hypothesis $h$ that minimizes a certain risk, i.e., $\Opt{inf}{h \in \mathcal{H}}{\R{\mathbb{P}}{\ell(h(x), y)}{}}{}$,
where $\ell: Y \times Y \rightarrow \mathbb{R}_+$ is a loss function and, for a fixed probability measure $\mathbb{P}$ and $\R{\mathbb{P}}{\cdot}{}$ is a functional quantifying the risk. Generally the risk is taken to be the expected value associated with $\mathbb{P}$, leading to, finding $h$ that minimizes 
\begin{equation}
\Opt{inf}{h \in \mathcal{H}}{\E{\mathbb{P}}{\ell(h(x), y)}}{}\label{Unc}.
\end{equation}

%

\subsection{Adversarial attacks and related approaches}
Considerations of adversarial attacks after learning involve a common modeling assumption \cite{Cullina2018, FeigeMansour2015, Madry2019,  LeeRaginsky2018, TuZhang2019}  which dictates that an adversary can manipulate data features within a certain vicinity of the original example, $x$. Formally, given a data perturbation, or adversarial power, $\xi$, the manipulated sample is denoted by $\tilde{x} \in \mathcal{B}_\xi(x)$, where,
\begin{equation}
\label{equation:adversarial_region}
\mathcal{B}_\xi(x) := \Big\{\tilde{x}:~ \norm{\tilde{x}-x} \le \xi \Big\},
\end{equation}
and the choice of the norm used to measure the distance can be arbitrary. For more details on state-of-the-art attacks, such as the Fast Gradient Sign Method (FGSM), Projected Gradient Descent (PGD), Carlini and Wagner (CW), and Deep Fool, please refer to Appendix \ref{appendix:attacks} for more information on these attacks.

An effective approach for learning models to defend against adversarial examples is a procedure called adversarial training \cite{Goodfellow2015, Madry2019}. The core idea is to expose the model, during the training process, to adversarial examples crafted to intentionally deceive it.  As a result, adversarially trained models learn to better defend against attacks, leading to increased predictability and reliability during inference. The concept emerged by studying the adversarial robustness of neural networks through the lens of robust optimization \cite{Madry2019}. More precisely, the authors examined the following parameterized $\min-\max$ problem,
\begin{equation}
\label{equation:min-max}
  \Opt{min}
  {\theta}
  {\E{\mathbb{P}}{\Opt{max}{\tilde{x} \in \mathcal{B}_\xi(x)}{\ell(\theta, \tilde{x}, y)}{}}}
  {}.
\end{equation} 

This formulation enabled the authors to cast both attacks and defenses within a common theoretical framework, naturally encapsulating most prior work on adversarial examples. Specifically, to reliably train models that are robust to adversarial attacks, they propose the adversarial empirical risk minimization (AERM) paradigm, where the learner does not know the distribution $\mathbb{P}$ but has access to $m$ independently and identically distributed examples $S = \big(z_1, \ldots, z_m\big) \in Z^m$. Setting $Z = X \times Y$ and $\omega = (x, y)$; then each point $z_i = (x_i, y_i)$ is sampled from the fixed but possibly unknown distribution $\mathbb{P}$. Note that $S$ induces a probability over $Z^m$ which we will denote by the product measure $\mathbb{P}^m$.
\begin{equation}
\label{equation:min-max-empirical}
  \Opt{min}
  {\theta}
  {\frac{1}{m}\sum_{i=1}^m \Opt{max}{\tilde{x}_i \in \mathcal{B}_\xi(x_i)}{\ell(\theta, \tilde{x}_i, y_i)}{}}
  {}.
\end{equation}



This approach has a clear impact on the result of the optimization problem. Take as an example the linear regression problem where, given a set of samples $S$, the traditional (non-adversarial) learner proceeds by deciding on the values of $a, b$ by means of the following ERM procedure,
\[
  \Opt{min}{a,b \in R^d}{\frac{1}{2m}\sum_{i=1}^m{\norm{y_i - (a^Tx_i + b)}_2^2}}{}.
\]
On the contrary, the AERM learner would formulate the following robust optimization counterpart
\[
  \Opt{min}
  {a, b \in \mathbb{R}^d}
  {\frac{1}{m}\sum_{i=1}^m \norm{y_i - (a^Tx_i + b + \xi \norm{a}_*)}_2^2}
  {},
\]
where $\|\cdot\|_*$ denotes the dual norm, that emanates through the reformulation of a $\min-\max$ robust program as in \eqref{equation:min-max}.
We refer to Appendix \ref{appendix:examples} for more details on the linear (Appendix \ref{appendix:linear_regression}) and logistic (Appendix \ref{appendix:logistic_regression}) regression problems.

In the upcoming sections, we will show that our approach distinguishes itself from AERM as we remove the need to solve the inner maximization problem. Instead, we focus on the robust counterpart of Equation \ref{equation:min-max-empirical}, whose theoretical PAC-learning guarantee is exposed in Section \ref{section:sample_complexit_bounds}, with the resulting optimization algorithms detailed in Section \ref{subsection:algorithms}.

Also, consider a gradient ascent step towards solving the inner maximization problem in \eqref{equation:min-max}.
\[
\tilde{x} = x + \xi \cdot \text{sign}(\nabla_x \ell(h_\theta(x, y)).
\]
This update results in adversarial examples and constitutes an attack. In particular, such an attack is considered for an $\ell_\infty$-bounded adversary in \cite{Goodfellow2015}, and is referred to as the FGSM attack. We will employ such an attack for the numerical results presented in the sequel.

\subsection{Proposed approach: margin-inspired adversarial training}

Typically, for a binary classifier, the learner minimizes the natural classification error, with the loss function $\ell(h(x), y) = \mathds{1}_{\{x\in X:~y \cdot h(x)<0\}}$, where the natural risk is 
\begin{align}
\R{\mathrm{nat}}{h}{} &= \E{\mathbb{P}}{\mathds{1}_{\{x\in X:~y \cdot h(x) \leq 0\}}} \nonumber \\
&= \int_\Omega \mathds{1}_{\{x\in X:~y \cdot h(x) \leq 0\}} d\mathbb{P}.
\end{align}

Drawing inspiration from SVM's margin theory (see Appendix \ref{appendix:svm_and_margin_theory}), we explore the concept of confidence margin for binary classification tasks. Given a real-valued function $h$ that operates on a data point $x$ labeled with $y$, the confidence margin is defined as $h^\prime(x,y) = y \cdot h(x)$. Thus, a correct classification by $h$ occurs when $h^\prime(x,y) > 0$, signifying that $x$ is classified accurately. Notably, $\big| h(x) \big|$ can be interpreted as the level of confidence in the prediction made by $h$.

Recall that in the presence of an adversary, the classifier might encounter $\tilde{x} \in \mathcal{B}_\xi(x)$ as defined in \eqref{equation:adversarial_region}. Although this constraint is imposed on the feature space $X$, our goal is to ensure PAC learnability for the class of functions $\mathcal{H}$. To achieve this, we limit ourselves to working with well-behaved functions. Specifically, we consider only functions $h$ that are Lipschitz continuous, which means that there exists a constant $B$ such that for all $x_0, x_1 \in X$, the following inequality holds,
\begin{equation}
\label{equation:lipschitz_constant}
\norm{h(x_1) - h(x_0)} \leq B \norm{x_1 - x_0}.
\end{equation}

Similar to the definition of $\tilde{x} \in \mathcal{B}_\xi(x)$, we can define the neighborhood of the decision boundary of $h$, namely $DB(h)$, as \cite{TRADES_ZhangElGhaoui2019}, i.e.,
\begin{equation}
    \label{equation:adversarial_region_h}    
  \mathcal{B}_\xi(DB(h)) = \{x \in X:\exists \Tilde{x} \in \mathcal{B}_\xi(x) | h(x) \cdot h(\Tilde{x}) \leq 0 \}.
\end{equation}

This motivates the definition of the robust and the boundary classification errors respectively as
\begin{align}
    \label{equation:R_rob_xi}
  \R{\mathrm{rob}}{h}{\xi} &= \E{\mathbb{P}}{\mathds{1}_{\{x \in X:~ \exists \Tilde{x} \in \mathcal{B}_\xi(x) | y \cdot h(\Tilde{x}) \leq 0\}}}, \\
  \label{equation:R_bdy_xi}
  \R{\mathrm{bdy}}{h}{\xi} &= \E{\mathbb{P}}{\mathds{1}_{\{x \in \mathcal{B}_\xi(DB(h)) | y \cdot h(x) > 0\}}}.
\end{align}

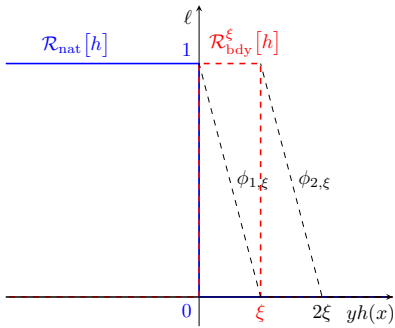
\begin{figure}[h]
  \centering
\begin{tikzpicture}[scale=0.75]
 \begin{axis}[axis lines=center, ymin = -0.125,ymax = 1.25, domain = -pi:pi, ticks=none]
 \node[below] at (2.79,0) {$yh(x)$};
 \node[left] at (0, 1.2) {$\ell$};
 \draw[thick, blue] (pi,0) -- (0,0) node[below left]{$0$} -- (0,1) node[above left]{$1$} -- (-pi,1);
 \node[above] at (-2, 1) {\textcolor{blue}{$\R{\mathrm{nat}}{h}{}$}};
 \draw[dashed, thick, red] (pi,0) -- (1,0) node[below]{$\xi$} -- (1,1) -- (0,1) -- (0,0) -- (-pi, 0);
 \node[above] at (0.75, 1) {\textcolor{red}{$\R{\mathrm{bdy}}{h}{\xi}$}};
 \draw[dashed] (pi,0) -- (1,0) -- (0,1);
 \node[right] at (0.5, 0.5) {$\phi_{1,\xi}$};
 \draw[dashed] (pi,0) -- (2,0) node[below]{$2\xi$} -- (1,1) ;
 \node[right] at (1.5, 0.5) {$\phi_{2,\xi}$};
 \addplot [mark=thick] {0};
 \end{axis}
\end{tikzpicture}
\caption{Graphical representation of the decision boundary and errors: natural error (blue), boundary error (dashed-red), and robust error (dashed-black).}
\label{fig:graph}
\end{figure}

The boundary error measures the probability of points correctly classified but near the boundary, which might be misclassified by a powerful adversary.
As a result of these definitions, the robust classification error can be decomposed into the natural classification error and the boundary classification error \cite{TRADES_ZhangElGhaoui2019},
\begin{equation}
  \R{\mathrm{rob}}{h}{\xi} = \R{\mathrm{nat}}{h}{} + \R{\mathrm{bdy}}{h}{\xi}.
\end{equation}
In other words, by inspection of Fig. \ref{fig:graph}, 
$\R{\mathrm{bdy}}{h}{\xi}$ constitutes a margin modification with respect to 
$\R{\mathrm{nat}}{h}{}$, to embed robustness towards example perturbations up to level $\xi$.
Let $\phi_{\lambda, \xi}$ be the surrogate loss, as shown graphically in Fig. \ref{fig:graph},
\begin{equation}
\phi_{\lambda, \xi}(z) = \Big(\lambda - \frac{z}{\xi} \Big)_+ = \max \Big(0, \lambda - \frac{z}{\xi} \Big).
\end{equation}
Due to the dominance conditions induced by these surrogate loss functions, we then have that
\begin{align*}
  \R{\mathrm{nat}}{h}{} \leq \E{}{\phi_{1,\xi}(y \cdot h(x))} 
  &\leq \R{\mathrm{rob}}{h}{\xi} \nonumber \\ &\leq \E{}{\phi_{2, \xi}(y \cdot h(x))}.
\end{align*}

Even though representing these sets in term of $\xi$ is the most natural, it is easier to work directly with another constant $\zeta =  B \xi$, where $B$ is the Lipschitz constant of $h$ as defined in \eqref{equation:lipschitz_constant}. Thus, we can introduce the following set,
\begin{equation}
  \mathcal{B}_\zeta(x) = \{\Tilde{x}  | \norm{h(\Tilde{x}) - h(x)} \leq \zeta \}.
\end{equation}

Inspired by the definition of $\mathcal{B}_\xi(DB(h))$, in \eqref{equation:adversarial_region_h}, we can also define,
\begin{equation}
  \mathcal{B}_\zeta(DB(h)) = \{x \in X | \exists \Tilde{x} \in \mathcal{B}_\zeta(x) \big| h(x) \cdot h(\Tilde{x}) \leq 0 \}.
\end{equation}
It is worth mentioning that both sets $\mathcal{B}_\zeta(x)$ and $\mathcal{B}_\zeta(DB(h))$ contain the sets $\mathcal{B}_\xi(x)$ and $\mathcal{B}_\xi(DB(h))$ respectively.

This leads to a reformulation of the robust classification error,
\begin{align}
  \label{equation:R_rob_zeta}
  \R{\mathrm{rob}}{h}{\zeta} &= \E{\mathbb{P}}{\mathds{1}_{\{x:~ \exists h(\Tilde{x}) \in \mathcal{B}_\zeta(x) | y \cdot h(\Tilde{x}) \leq 0\}}}.
\end{align}
We aim at designing PAC bounds for $\R{\mathrm{rob}}{h}{\zeta}$, that as a result will constitute probabilistic classification statements. In the next section we show how to determine such bounds first for binary classifiers, and subsequently for multi-class ones.

\section{Main results}

\subsection{Sample complexity bounds}
\label{section:sample_complexit_bounds}
\subsubsection{Binary classifiers}
\label{subsection:binary_classifier}

Theorem \ref{theorem:main_theorem} below constitutes our main theoretical result. It provides PAC learning bounds on the robust classification error in \eqref{equation:R_rob_zeta}. In particular, we provide complexity bounds for the sample size $m$, showing that the worst-case surrogate loss adversarial training method results in a learned classifier that can achieve a given classification accuracy of level $\epsilon$, with confidence at least $1-\delta$, for given $\epsilon, \delta \in (0,1)$. The obtained sample size bounds provide explicit expressions for $m$ as a function of $\epsilon$, $\delta$, the adversarial power $\zeta$ and the complexity of the class of hypotheses $\mathcal{H}$, represented by the Rademacher complexity, $\Rademacher{m}(\mathcal{H})$ (Definition \ref{definition:Rademacher} in Appendix \ref{appendix:definitions_and_theorems}), that are of the same complexity with their non-adversarial counterparts up to the level of a constant.
We show how to construct classifiers that enjoy such PAC properties in the following subsection.

Note that subsequent results involve considering an arbitrary $\zeta$; this is effectively equivalent to fixing an arbitrary $\xi$.

\begin{theorem}
\label{theorem:main_theorem}
\emph{Binary classifier.} Consider the hypothesis class $\mathcal{H}$ of Lipschitz continuous functions. Fix any $\zeta>0$. We then have that, with probability at least $1-\delta$, for any $h \in \mathcal{H}$, 
  \begin{equation*}
    \R{\mathrm{rob}}{h}{\zeta} \leq \frac{1}{m} \sum_{i=1}^m \phi_{2, \zeta}(y_i \cdot h(x_i)) + \frac{2}{\zeta}\Rademacher{m}(\mathcal{H}) + \sqrt{\frac{\log{\frac{1}{\delta}}}{2m}}.
  \end{equation*}

The aforementioned bound holds uniformly, i.e., for $\gamma > 1$ and for any fixed $r>0$, with probability at least $1-\delta$, for all $\zeta \in ]0, r]$, and for any $h \in \mathcal{H}$,
  \begin{align*}
    \R{\mathrm{rob}}{h}{\zeta} \leq& \frac{1}{m} \sum_{i=1}^m \phi_{2, \zeta}(y_i \cdot h(x_i)) + \frac{2 \gamma}{\zeta}\Rademacher{m}(\mathcal{H}) \\
    &+ \sqrt{\frac{\log \log_\gamma \frac{\gamma r}{\zeta}}{m}} + \sqrt{\frac{\log{\frac{2}{\delta}}}{2m}}.
  \end{align*}
\end{theorem}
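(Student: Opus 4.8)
The plan is to reduce the fixed-$\zeta$ statement to a textbook margin-based Rademacher bound and then upgrade it to a statement uniform in $\zeta$ by a peeling (stratification) argument over a geometric grid of scales. For the fixed case I would first exploit the dominance inequality already established before the theorem, namely $\R{\mathrm{rob}}{h}{\zeta} \le \E{}{\phi_{2,\zeta}(y\cdot h(x))}$, so that it suffices to control the expected surrogate loss uniformly over $h \in \mathcal H$. To this end I introduce the induced function class $\mathcal G_\zeta = \{(x,y)\mapsto \phi_{2,\zeta}(y\cdot h(x)) : h \in \mathcal H\}$ and apply the standard two-sided generalization bound obtained from McDiarmid's inequality followed by symmetrization: with probability at least $1-\delta$, every $g \in \mathcal G_\zeta$ satisfies $\E{}{g} \le \frac1m\sum_{i=1}^m g(z_i) + 2\Rademacher{m}(\mathcal G_\zeta) + \sqrt{\log(1/\delta)/(2m)}$. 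The last ingredient is to pass from $\Rademacher{m}(\mathcal G_\zeta)$ to $\Rademacher{m}(\mathcal H)$. Since $\phi_{2,\zeta}$ is $\tfrac1\zeta$-Lipschitz (its slope is $-1/\zeta$ wherever it is active) and the labels satisfy $y_i\in\{-1,+1\}$ so that $\sigma_i y_i$ is again a Rademacher variable, Talagrand's contraction lemma yields $\Rademacher{m}(\mathcal G_\zeta) \le \tfrac1\zeta\Rademacher{m}(\mathcal H)$. Chaining these three facts gives the first displayed inequality.

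For the uniform statement I would discretize the scale with the geometric grid $\zeta_k = r\gamma^{-k}$, $k = 0,1,2,\dots$, apply the fixed-$\zeta$ bound at each $\zeta_k$ with its own confidence level $\delta_k$, and take a union bound. Choosing $\delta_k \propto \delta/(k+1)^2$ so that $\sum_{k\ge0}\delta_k \le \delta$ makes $\log(1/\delta_k)$ grow like $2\log(k+1) + \log(1/\delta)$; this is precisely what produces the two separate tail terms $\sqrt{\log\log_\gamma(\gamma r/\zeta)/m}$ and $\sqrt{\log(2/\delta)/(2m)}$ once $k$ is identified with the scale. Given an arbitrary $\zeta \in ]0,r]$, I would locate the unique bracket $\zeta \in ]\zeta_{k+1},\zeta_k]$, so that $k = \lfloor \log_\gamma(r/\zeta)\rfloor$ and therefore $k+1 \le \log_\gamma(\gamma r/\zeta)$, and then transfer the inequality valid at the endpoint to $\zeta$ itself. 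The factor $\gamma$ in the complexity term is exactly the price of the geometric spacing: on the bracket one has $\zeta_k/\zeta \le \gamma$, hence $1/\zeta_k \le \gamma/\zeta$, which turns $\tfrac2{\zeta_k}\Rademacher{m}(\mathcal H)$ into $\tfrac{2\gamma}{\zeta}\Rademacher{m}(\mathcal H)$.

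The delicate step, and the one I expect to be the main obstacle, is the transfer across a bracket, i.e.\ deducing the bound at an arbitrary $\zeta$ from the bound proved only at the grid endpoint. This needs two monotonicity-type facts. The robust error is monotone in the radius — because $\mathcal B_\xi(x)\subseteq\mathcal B_{\xi'}(x)$ for $\xi\le\xi'$ the event defining $\R{\mathrm{rob}}{h}{\cdot}$ only grows — so $\R{\mathrm{rob}}{h}{\zeta}$ is controlled by the value at the larger endpoint; this part is routine. The genuinely delicate point is the empirical surrogate term, since $\zeta\mapsto\phi_{2,\zeta}(z)$ is increasing in $\zeta$ for $z>0$ but decreasing for $z<0$, so it is not monotone in the scale. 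I would resolve this either by working with the surrogate clipped to a bounded range (which makes it non-decreasing in $\zeta$ on the region that matters, in the spirit of the clipped margin loss used in classical margin bounds) or by a direct estimate of the variation of $\phi_{2,\zeta}$ over the bracket $[\zeta_{k+1},\zeta_k]$, absorbing the $\mathcal O(1-\gamma^{-1})$ discrepancy into the grid constant. Getting this comparison tight, rather than the peeling bookkeeping, is where the real work lies.
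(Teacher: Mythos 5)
Your fixed-$\zeta$ argument is exactly the paper's: dominance of the robust error by the expected surrogate, the two-sided Rademacher generalization bound, Talagrand's contraction with Lipschitz constant $1/\zeta$, and absorption of the labels $y_i\in\{-1,+1\}$ into the Rademacher variables. That half is fine (modulo a wrinkle you share with the paper: McDiarmid requires a bounded loss, and $\phi_{2,\zeta}$ is unbounded unless $\mathcal{H}$ is uniformly bounded; working with $\min(1,\phi_{2,\zeta})$, which still dominates the indicator and is still $\frac{1}{\zeta}$-Lipschitz, fixes this). For the uniform half you also follow the paper's route, namely the peeling of Mohri's Theorem 5.9 with $\zeta_k=r\gamma^{-k}$ and a $\delta_k\propto\delta/(k+1)^2$ union bound, and you have correctly isolated the crux: transferring the bound from a grid point to an arbitrary $\zeta$ inside a bracket. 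You should know that the paper does not actually resolve this step; its proof simply substitutes $\zeta_k\ge\zeta/\gamma$ and $\sqrt{\log k}\le\sqrt{\log\log_\gamma\frac{\gamma r}{\zeta}}$ into every term, which is legitimate for the Rademacher and $\epsilon_k$ terms but not for the risk term (the paper's grid point is the \emph{lower} endpoint of the bracket, and the robust error increases with the radius, so $\R{\mathrm{rob}}{h}{\zeta}$ exceeds the grid value rather than being bounded by it) nor for the empirical surrogate term (not monotone in the scale, exactly as you observe).

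The genuine gap is that neither of your proposed repairs closes this step. Clipping makes the surrogate non-decreasing in $\zeta$, but that is the wrong direction: the risk forces you to the \emph{upper} endpoint of the bracket, and at the upper endpoint a $\zeta$-non-decreasing empirical loss satisfies $\Eemp{S}{\bar\phi_{2,\zeta_k}}\ge\Eemp{S}{\bar\phi_{2,\zeta}}$, again the wrong way. The conflict is structural: in Mohri's Theorem 5.9 the left-hand side (the natural $0$--$1$ risk) does not depend on the margin parameter, so a single endpoint serves both sides; here \emph{both} sides move with $\zeta$ in the same direction, so no choice of endpoint works, clipped or not. Your second repair mis-assesses the damage: the bracket discrepancy $\sup_{z\ge0}\,\big[\phi_{2,\zeta_{\mathrm{up}}}(z)-\phi_{2,\zeta}(z)\big]$ is of order $2(1-\gamma^{-1})$, an \emph{additive constant independent of} $m$; it cannot be absorbed into $\frac{2\gamma}{\zeta}\Rademacher{m}(\mathcal{H})$ or $\sqrt{\log\log_\gamma(\gamma r/\zeta)/m}$ (both vanish as $m$ grows), and refining the grid to shrink it inflates the $\log\log$ term beyond the stated form. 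Along your lines one can only prove the theorem with an extra additive $O(1-\gamma^{-1})$ term. A repair that does yield the stated scaling is to make the left-hand side scale-free before peeling: use $\R{\mathrm{rob}}{h}{\zeta}\le\mathbb{P}\big(yh(x)\le\zeta\big)$, view this as the $0$--$1$ risk of the shifted class $\{(x,y)\mapsto yh(x)-\zeta\}$, run the uniform-in-margin argument for that class, and then set the margin parameter equal to $\zeta$ (whose clipped margin loss is precisely $\min(1,\phi_{2,\zeta}(yh(x)))\le\phi_{2,\zeta}(yh(x))$); the price is an extra $O(r/\sqrt{m})$ in the complexity term coming from the shifts. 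In short, your diagnosis of the delicate step is sharper than the paper's own treatment, but the proposal as written does not prove the uniform bound as stated.
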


\begin{proof}
Given the definitions \eqref{equation:R_rob_zeta} we use standard inequalities in the statistical learning theoretic literature and Talagrand's lemma to bound the Rademacher complexity of the loss functions by the complexity of the hypothesis class $\mathcal{H}$. In particular, we show the result holds for all $\zeta \in ]0,r]$, by appropriately choosing series of $\zeta_k,\ \epsilon_k$ that converge uniformly. We refer to Appendix \ref{appendix:proofs:main_theorem} for a complete proof.
\end{proof}

\begin{remark} \emph{On the uniformity of the convergence statement.}
 Uniform convergence not only strengthens the convergence notion but also empowers the learner to make informed decisions about the model's performance against various adversaries, making it a valuable tool in practical machine learning applications. For instance, the first inequality in Theorem \ref{theorem:main_theorem} allows the learner to calculate the required sample size to achieve a desired level of accuracy and confidence against one adversary. However, when confronted with a stronger adversary, the learner is not be able to provide the same guarantees regarding the model's accuracy or confidence level. Instead, considering the uniform convergence case, the learner is able to ensure accuracy and confidence for a range of adversaries just after training. In other words, once the model has been trained, the learner can confidently assert its performance regarding accuracy and confidence against a range of adversaries.


When comparing both inequalities in Theorem \ref{theorem:main_theorem}, we notice that the statement with uniform convergence has only a small impact the sample complexity, as the order of the sample complexity remains unchanged, differing only by some constants.
\end{remark}

\begin{remark} \emph{Price of robustness.} 
For the binary classifier, in a standard learning process without an adversary, by Theorem \ref{theorem:rademacher_bound_binary_classifier} we have that with probability at least $1-\delta$,
\[
  \R{\mathrm{nat}}{h}{} \leq \frac{1}{m} \sum_{i=1}^m \phi_{2, \zeta}(y_i \cdot h(x_i)) + \Rademacher{m}(\mathcal{H}) + \sqrt{\frac{\log{\frac{1}{\delta}}}{2m}}.
\]

For the sake of simplicity, consider a strong adversary and choose $\gamma = \zeta=r > 1$. In this case, the inequality stated in Theorem \ref{theorem:main_theorem} takes a simpler form,
\[
    \R{\mathrm{rob}}{h}{\zeta} \leq \frac{1}{m} \sum_{i=1}^m \phi_{2, \zeta}(y_i \cdot h(x_i)) + 2\Rademacher{m}(\mathcal{H}) + \sqrt{\frac{\log{\frac{2}{\delta}}}{2m}}.
\]
This indicates that the sample complexity has the same order as the natural training procedure, only differing by constants influencing the Rademacher complexity.
\end{remark}

\begin{remark} \emph{Effect of the adversarial power.}
As the parameter $\zeta$ increases, the influence of the Rademacher complexity of the hypothesis class on the sample complexity diminishes. However, a higher $\zeta$ also leads to a more loose approximation of the desired loss by the surrogate loss as seen in Fig \ref{fig:graph}, indicating a deterioration in the quality of the approximation. In this case, it is intuitive that fewer observations are required to satisfy a more conservative inequality.
Conversely, when $\zeta$ is small, we witness the opposite effect, the surrogate loss approximation becomes tighter, at the cost of a higher dependency on the Rademacher complexity of the hypothesis class.
\end{remark}

In the following results we choose to state the uniform convergence version of the theorem. A non-uniform version is easily achievable by omitting the extra term in the equations.

\subsubsection{Linear binary classifiers with bounded inputs} \label{case:class_linear}

We now specialize attention to the case  where $\mathcal{H}$ is the class of affine functions $a^Tx+b$ in $\mathbb{R}^d$. 
For non-homogeneous half spaces in $\mathbb{R}^d$, 
The Rademacher complexity $\Rademacher{m}(\mathcal{H})$ for such classes can be bounded by a function of the VC dimension, which is in turn bounded by $d+1$ (see Theorem \ref{Theorem:VCdim_binary}),
\begin{align*}
  \R{\mathrm{rob}}{h}{\zeta} \leq & \frac{1}{m} \sum_{i=1}^m \phi_{2, \zeta}(y_i \cdot h(x_i)) + \sqrt{\frac{\log \log_\gamma \frac{\gamma r}{\zeta}}{m}} \\
  & + \frac{2 \gamma}{\zeta} \sqrt{\frac{2(d+1)\log \frac{em}{d+1}}{m} } + \sqrt{\frac{\log{\frac{2}{\delta}}}{2m}}.
\end{align*}
Compact spaces reduce the impact of dimensionality $d$ on sample complexity, as demonstrated in the following lemma and corollary. They remove the reliance on the Rademacher complexity of class $\mathcal{H}$ by utilizing norm and dual-norm bounds in the input and parameter space. While this may appear as a stringent restriction, it is natural in image classification problems as pixels have maximum attainable values.

\begin{lemma} 
\label{lemma:linear_classifier_rademacher_bound}
Let $\mathcal{H}$ the hypothesis class of affine functions as defined above. Assume that, for all $x \in X$, $\norm{x^\prime} \leq u$, with $x^\prime = [x^T, 1]^T$, and that $\norm{w}_* \leq v$, where $w = [a^T, b]^T$. We then have that $\Rademacher{m} (\mathcal{H}) \leq \sqrt{\frac{u^2 v^2}{m}}$.
\end{lemma}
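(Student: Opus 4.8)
The plan is to unfold the definition of the Rademacher complexity and exploit the linearity of $\mathcal{H}$ together with the dual-norm assumption on the weights. Writing each hypothesis as $h(x_i) = w^\top x_i'$ with $w = [a^\top, b]^\top$ and $x_i' = [x_i^\top, 1]^\top$, and letting $\sigma_1,\dots,\sigma_m$ be i.i.d.\ Rademacher variables, the empirical Rademacher complexity becomes
\[
\EmpRademacher{S}(\mathcal{H}) = \E{\sigma}{\sup_{\norm{w}_* \leq v} w^\top \Big(\frac{1}{m}\sum_{i=1}^m \sigma_i x_i'\Big)}.
\]
First I would bound the inner supremum using the definition of the dual norm (equivalently, H\"older's inequality): since the bidual coincides with the original norm on $\mathbb{R}^{d+1}$, one has $\sup_{\norm{w}_* \leq v} w^\top z = v\,\norm{z}$, which gives
\[
\EmpRademacher{S}(\mathcal{H}) \leq v\, \E{\sigma}{\Big\| \frac{1}{m}\sum_{i=1}^m \sigma_i x_i' \Big\|}.
\]

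Next I would apply Jensen's inequality, using concavity of the square root, to move the expectation inside: $\E{\sigma}{\|\cdot\|} \leq \sqrt{\E{\sigma}{\|\cdot\|^2}}$. The heart of the argument is then the second-moment computation. Expanding the squared Euclidean norm as an inner product and using $\E{\sigma}{\sigma_i \sigma_j} = \mathds{1}_{\{i=j\}}$, all cross terms vanish and
\[
\E{\sigma}{\Big\| \frac{1}{m}\sum_{i=1}^m \sigma_i x_i' \Big\|_2^2} = \frac{1}{m^2}\sum_{i=1}^m \norm{x_i'}_2^2 \leq \frac{u^2}{m},
\]
where the final inequality uses the assumed bound $\norm{x_i'} \leq u$. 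Combining the three steps yields $\EmpRademacher{S}(\mathcal{H}) \leq v\sqrt{u^2/m} = \sqrt{u^2 v^2/m}$, and since this bound is uniform over realizations of $S$, taking the expectation over $S$ gives the claimed bound on $\Rademacher{m}(\mathcal{H})$.

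The main obstacle is the second-moment step: the clean cancellation of cross terms relies on the Euclidean structure $\norm{z}_2^2 = z^\top z$, so the tidy constant $uv$ is specific to $\ell_2$. For a general norm the cross terms do not vanish, and one would instead have to invoke a Khintchine-type inequality at the cost of an additional dimension- or norm-dependent constant. I would therefore take $\norm{\cdot}$ to be the Euclidean norm, which is self-dual and thus makes the assumptions $\norm{x'} \leq u$ and $\norm{w}_* \leq v$ symmetric, delivering exactly the stated result.
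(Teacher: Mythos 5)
Your proof follows essentially the same route as the paper's own: unfold the empirical Rademacher complexity, replace the supremum over the weight ball by a (dual) norm of $\frac{1}{m}\sum_i \sigma_i x_i^\prime$, apply Jensen's inequality, and use independence of the Rademacher signs to kill the cross terms, then take expectation over $S$. If anything, your treatment is the more careful one: the paper's proof states the constraint as $\norm{w}\le v$ (inconsistent with the lemma's assumption $\norm{w}_*\le v$) and manipulates the squared dual norm as though it had Euclidean inner-product structure, which is exactly the issue you flag when you observe that the clean cross-term cancellation is specific to the self-dual $\ell_2$ norm and would otherwise require a Khintchine-type argument with an extra constant.
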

\begin{proof}
The proof bounds the Rademacher complexity through the use of the dual norm, which is typically employed to reformulate robust optimization programs. The complete proof is available at Appendix \ref{appendix:proofs:main_theorem:linear_classifier_rademacher_bound}.
\end{proof}

A linear classifier for this case can be computed by means of a linear optimization program as shown in Section \ref{case:class_linear_compute}. However, prior to discussing this we show the probabilistic error classification guarantees that accompany such a classifier.
\begin{corollary} \label{corollary:linear_with_bounded} \emph{Linear binary classifier with bounded inputs.} Let $\mathcal{H} = \{x \rightarrow a^Tx+b,\ a \in \mathbb{R}^d,\ b \in
  \mathbb{R}\}$. Assume that $\{x \in X| \norm{x^\prime} \leq u\}$ and that $\norm{w}_* \leq v$, where $w = [a^T, b]^T$. We then have that for any $\gamma >1$ and any $r>0$, with probability at least $1-\delta$, for any $\xi \in ]0, \frac{r}{v}]$, and for any linear classifier (parameterized by $a, b$), 
\begin{align*}
  \R{\mathrm{rob}}{h}{\zeta} \leq & \frac{1}{m} \sum_{i=1}^m \phi_{2, v \xi}(y_iw^Tx_i^\prime) + \frac{2 \gamma}{\xi}\sqrt{\frac{u^2}{m}}\\
  & + \sqrt{\frac{\log \log_\gamma \frac{\gamma r}{v \xi}}{m}} + \sqrt{\frac{\log{\frac{2}{\delta}}}{2m}}.
\end{align*}
\end{corollary}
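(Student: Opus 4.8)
The plan is to derive the corollary directly from the uniform-convergence statement of Theorem~\ref{theorem:main_theorem} by substituting the affine-class Rademacher bound of Lemma~\ref{lemma:linear_classifier_rademacher_bound} and then performing a change of variables that converts the output-space level $\zeta$ into the input-space adversarial power $\xi$. First I would instantiate the hypothesis class as $\mathcal{H}=\{x\mapsto a^Tx+b\}$ and write every classifier in homogeneous coordinates, $h(x_i)=w^Tx_i^\prime$ with $w=[a^T,b]^T$ and $x_i^\prime=[x_i^T,1]^T$, so that the empirical surrogate term $\frac{1}{m}\sum_i\phi_{2,\zeta}(y_i\,h(x_i))$ in Theorem~\ref{theorem:main_theorem} becomes $\frac{1}{m}\sum_i\phi_{2,\zeta}(y_i\,w^Tx_i^\prime)$, matching the form claimed in the corollary.

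Next I would fix the relationship $\zeta=v\xi$, which links the output-space level $\zeta$ appearing in Theorem~\ref{theorem:main_theorem} to the input-space adversarial power $\xi$. For an affine map the relevant Lipschitz constant is $B=\norm{a}_*$, and since appending the bias coordinate cannot decrease the dual norm we have $B=\norm{a}_*\le\norm{w}_*\le v$; taking the conservative value $B=v$ yields $\zeta=B\xi=v\xi$. This is the step I expect to require the most care, since it is where the input-space adversary is traded for the output-space formulation on which Theorem~\ref{theorem:main_theorem} is stated. One should verify that replacing the exact Lipschitz constant by its upper bound $v$ only enlarges the adversarial set, via the containment $\mathcal{B}_\xi(x)\subseteq\mathcal{B}_{v\xi}(x)$ noted just before \eqref{equation:R_rob_zeta} together with monotonicity of $\R{\mathrm{rob}}{h}{\cdot}$ in its level, so that the resulting inequality is only made more conservative and hence remains valid.

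With $\zeta=v\xi$ in hand, the remaining work is a direct substitution into the four terms of the uniform bound. Invoking Lemma~\ref{lemma:linear_classifier_rademacher_bound} replaces $\Rademacher{m}(\mathcal{H})$ by $\sqrt{u^2v^2/m}$, so the complexity term reads $\frac{2\gamma}{v\xi}\sqrt{u^2v^2/m}$; the factor $v$ then cancels to leave $\frac{2\gamma}{\xi}\sqrt{u^2/m}$, which is precisely the mechanism that removes the dependence on the parameter-space radius $v$ from the dominant complexity term. The iterated-logarithm term $\sqrt{\log\log_\gamma(\gamma r/\zeta)/m}$ becomes $\sqrt{\log\log_\gamma(\gamma r/(v\xi))/m}$, the confidence term $\sqrt{\log(2/\delta)/(2m)}$ is unchanged, and the admissible range $\zeta\in]0,r]$ maps to $\xi\in]0,r/v]$. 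Collecting these substitutions reproduces exactly the claimed inequality, which then holds with probability at least $1-\delta$ uniformly over the stated range of $\xi$ and over all affine classifiers parameterized by $a,b$, completing the argument.
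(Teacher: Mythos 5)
Your proposal is correct and follows essentially the same route as the paper's proof: instantiate the uniform-convergence bound of Theorem \ref{theorem:main_theorem} for the affine class, plug in the Rademacher bound $\sqrt{u^2v^2/m}$ from Lemma \ref{lemma:linear_classifier_rademacher_bound}, and perform the change of variables $\zeta = v\xi$ justified by the Lipschitz property of $x \mapsto w^Tx^\prime$, after which the factor $v$ cancels in the complexity term. If anything, your treatment of the Lipschitz step (distinguishing $\norm{a}_*$ from $\norm{w}_*$ and noting that using the upper bound $v$ only enlarges the adversarial set) is slightly more careful than the paper's, which states the same substitution without the containment/monotonicity remark.
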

\begin{proof}
The proof follows from Theorem \ref{theorem:main_theorem} and Lemma \ref{lemma:linear_classifier_rademacher_bound} and is provided in Appendix \ref{appendix:proofs:main_theorem:linear_classifier_rademacher_bound:corollary}.
\end{proof}

\subsubsection{Kernel-based non-linear binary classifiers} \label{case:class_kernel}
The so called ``kernel'' approach, commonly used for non-linear classifiers \cite{Book_ML_ShalevBeDavid2014}, embeds the input space into a higher-dimensional feature space and employs a linear classifier there. This enables non-linear classification in the original input space. However, applying this approach can be challenging due to potential infinite-dimensional feature spaces or the need for a large number of sample points to achieve desired accuracy. To address these challenges, kernel-based learning approaches provide a solution.

\begin{definition} \emph{Kernel.} Given an embedding $\psi : X \rightarrow \mathbb{H}$, mapping the domain space into some Hilbert space, we define the Kernel function as $K(x, x^\prime) = \langle \psi(x), \psi(x^\prime) \rangle$, for all $x, x^\prime \in X$.
\end{definition}

\begin{definition} \label{definition:pds_kernel} \emph{Positive Definite Symmetric (PDS) kernels.} A kernel $K: X \times X \rightarrow \mathbb{R}$ is said to be posititive definite symmetric (PDS) if for any ${x_1, \ldots, x_m} \subseteq X$, the matrix $\mathbf{K} = \left[ k(x_i, x_j)\right]_{ij} \in \mathbb{R}^{m \times m}$ is symmetric positive semidefinite. 
\end{definition}

Kernel-based classifiers can be computed by means of a second-order cone program as shown in Section \ref{case:class_kernel_compute}. However, prior to discussing this, the following corollary of Theorem \ref{theorem:main_theorem} shows the probabilistic error classification guarantees that accompany such a classifier.


\begin{corollary} 
\label{corollary:kernel} 
Let $K: X \times X \rightarrow \mathbb{R}$ be a PDS kernel, $\mathbb{H}$ its corresponding RKHS, (Theorem\ref{theorem:rkhs}), equipped a the norm $\norm{\cdot}_{\mathbb{H}}$, and $\psi: X \rightarrow \mathbb{H}$, the feature map associated with it. Let $\mathcal{H} = \{x \rightarrow w^T \psi(x), x \in X, w \in \mathbb{H}\}$. Assume that $\norm{w}_\mathbb{H} \leq v$ and that $K(x, x) < u^2$. We then have that with probability at least $1-\delta$,
\begin{mini*}|s|
    {\alpha^T \mathbf{K} \alpha \leq v^2}
    {\frac{1}{m} \sum_{i=1}^m \phi_{2, \zeta}(y_i (\mathbf{K} \alpha)_i)}
    {}{\Opt{min}{w \in \mathbb{H}}{\R{\mathrm{rob}}{h}{\zeta}}{} -}
\end{mini*}
\begin{equation*}
    \leq \frac{2 \gamma}{\zeta}\sqrt{\frac{u^2v^2}{m}} + \sqrt{\frac{\log \log_\gamma \frac{\gamma r}{\zeta}}{m}} + \sqrt{\frac{\log{\frac{2}{\delta}}}{2m}},
\end{equation*} 
where $\mathbf{K} = [K(x_i, x_j)]_{ij}$ is a symmetric positive semi-definite matrix and $\alpha \in X^m$.
\end{corollary}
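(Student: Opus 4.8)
The plan is to combine three ingredients: a representer-theorem reduction identifying the finite-dimensional program in $\alpha$ with empirical surrogate minimisation over the RKHS ball, a Rademacher complexity bound for the kernel class mirroring Lemma~\ref{lemma:linear_classifier_rademacher_bound}, and the uniform bound of Theorem~\ref{theorem:main_theorem}.

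First I would establish the reduction. Writing $\hat{L}(w)=\frac{1}{m}\sum_{i=1}^m\phi_{2,\zeta}(y_i\,w^T\psi(x_i))$, the claim is that $\min_{\norm{w}_\mathbb{H}\leq v}\hat{L}(w)$ coincides with $\min_{\alpha^T\mathbf{K}\alpha\leq v^2}\frac{1}{m}\sum_{i=1}^m\phi_{2,\zeta}(y_i(\mathbf{K}\alpha)_i)$. The argument is the standard orthogonal decomposition: any $w\in\mathbb{H}$ splits as $w=\sum_j\alpha_j\psi(x_j)+w^\perp$ with $w^\perp$ orthogonal to $\mathrm{span}\{\psi(x_i)\}$; the objective depends on $w$ only through the evaluations $w^T\psi(x_i)=\sum_j\alpha_j K(x_j,x_i)=(\mathbf{K}\alpha)_i$ and is thus unaffected by $w^\perp$, whereas $\norm{w}_\mathbb{H}^2=\alpha^T\mathbf{K}\alpha+\norm{w^\perp}_\mathbb{H}^2$; discarding $w^\perp$ therefore leaves the objective unchanged and does not increase the norm, so the truncated point $\sum_j\alpha_j\psi(x_j)$ stays feasible. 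Hence an optimiser lies in the span and the two programs share the same optimal value, the constraint $\norm{w}_\mathbb{H}\leq v$ corresponding exactly to $\alpha^T\mathbf{K}\alpha\leq v^2$.

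Next I would bound $\Rademacher{m}(\mathcal{H})$, proceeding as in Lemma~\ref{lemma:linear_classifier_rademacher_bound} but with the RKHS inner product in place of the Euclidean one. Pulling the constraint $\norm{w}_\mathbb{H}\leq v$ out of the supremum by Cauchy--Schwarz leaves $\frac{v}{m}\,\mathbb{E}_\sigma\big[\,\norm{\sum_i\sigma_i\psi(x_i)}_\mathbb{H}\,\big]$; Jensen's inequality together with the vanishing of the Rademacher cross terms gives $\frac{v}{m}\sqrt{\sum_i K(x_i,x_i)}$, and the assumption $K(x,x)<u^2$ then yields $\Rademacher{m}(\mathcal{H})\leq\sqrt{u^2v^2/m}$.

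Finally I would assemble the excess-risk bound. Let $\hat{w}$ attain $\min_{\norm{w}_\mathbb{H}\leq v}\hat{L}(w)$, so that by the reduction $\hat{L}(\hat{w})$ equals the displayed program in $\alpha$. Because Theorem~\ref{theorem:main_theorem} holds uniformly over $h\in\mathcal{H}$, it applies to the data-dependent hypothesis $h_{\hat w}$; substituting the Rademacher bound gives $\R{\mathrm{rob}}{h_{\hat w}}{\zeta}\leq\hat{L}(\hat{w})+\frac{2\gamma}{\zeta}\sqrt{u^2v^2/m}+\sqrt{\log\log_\gamma(\gamma r/\zeta)/m}+\sqrt{\log(2/\delta)/(2m)}$. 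Since $\min_{w\in\mathbb{H}}\R{\mathrm{rob}}{h_w}{\zeta}\leq\R{\mathrm{rob}}{h_{\hat w}}{\zeta}$, rearranging yields the stated bound. I expect the main obstacle to be the reduction step: one must verify that the surrogate objective depends on $w$ only through the inner products $w^T\psi(x_i)$ and that the RKHS-ball constraint transfers cleanly to the quadratic form $\alpha^T\mathbf{K}\alpha\leq v^2$, taking care of the possibly infinite-dimensional setting and of the fact that $\mathbf{K}$ is only positive semidefinite, so the $w\leftrightarrow\alpha$ correspondence need not be unique even though the optimal value is.
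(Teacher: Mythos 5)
Your proposal is correct, and its overall skeleton matches the paper's proof: reduce the RKHS-ball empirical minimization to the finite-dimensional program in $\alpha$, bound $\Rademacher{m}(\mathcal{H})$ by $\sqrt{u^2v^2/m}$ exactly as in Lemma~\ref{lemma:linear_classifier_rademacher_bound} (Cauchy--Schwarz, Jensen, vanishing cross terms, $K(x,x)<u^2$), then substitute into the uniform bound of Theorem~\ref{theorem:main_theorem}. Where you genuinely differ is the reduction step. The paper first rewrites the norm-constrained problem as a penalized one with a Lagrange variable and then cites the representer theorem (Theorem~\ref{theorem:representer}) to conclude $w=\sum_i\alpha_i\psi(x_i)$; as written this is the weakest link of the paper's argument, since the penalty enters with a minus sign (so the function playing the role of $G$ in Theorem~\ref{theorem:representer} is decreasing rather than non-decreasing), and the equivalence between the constrained and penalized forms is asserted rather than established. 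You instead prove the reduction directly by orthogonal projection onto $\mathrm{span}\{\psi(x_i)\}$: the objective depends on $w$ only through the evaluations $\langle w,\psi(x_i)\rangle=(\mathbf{K}\alpha)_i$, which are unaffected by the orthogonal component, while $\norm{w}_\mathbb{H}^2=\alpha^T\mathbf{K}\alpha+\norm{w^\perp}_\mathbb{H}^2$ can only decrease when $w^\perp$ is discarded, so the two programs share their optimal value even when $\mathbf{K}$ is singular. This is the representer theorem's own proof inlined, but it handles the hard ball constraint rigorously, which is what the corollary actually needs. Your final assembly is also more explicit than the paper's: you apply the uniform statement of Theorem~\ref{theorem:main_theorem} to the data-dependent minimizer $\hat w$ (legitimate precisely because the bound holds simultaneously for all $h\in\mathcal{H}$) and then use $\min_{w\in\mathbb{H}}\R{\mathrm{rob}}{h}{\zeta}\leq\R{\mathrm{rob}}{h_{\hat w}}{\zeta}$, a rearrangement the paper leaves implicit.
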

\begin{proof}
The proof is provided in Appendix \ref{appendix:proofs:lemma:kernel}.
\end{proof}

\subsubsection{Multi-class classifiers}

For multi-class classifiers with $k$ classes, the preferred method involves employing scoring functions $h$, enabling the classifier to determine the class associated with the highest score. This approach establishes a mapping \cite{Book_ML_Mohri2018} between the input data and the class that yields the maximum score,
\[
  x \in \Opt{argmax}{y \in \{1, \ldots, k\}}{h(x,y)}{}.
\] 
Similar to the binary classification case, it is possible to generalize the concept of confidence margin, by defining
\begin{equation}
  h^\prime(x,y) = h(x,y) - \Opt{max}{y^\prime \neq y}{h(x, y ^\prime)}{}.
\end{equation}
Note that if $h$ misclassifies $(x,y)$, then $h^\prime(x,y) < 0$. In this case, the learner's goal is to minimize the natural classification error,
\begin{equation}
  \R{\mathrm{nat}}{h}{} = \E{\mathbb{P}}{\mathds{1}_{\{x\in X:~(h(x,y) - \Opt{max}{y^\prime \neq y}{h(x, y ^\prime)}{}) \leq 0\}}}.
\end{equation}

We focus on an adversary aiming to deceive the classifier by inducing a mistake, without targeting a specific class-to-class transformation. This involves manipulating the input to be classified as the closest class, without a particular target class. Similar to the binary classifier, we define $\R{\mathrm{rob}}{h}{\zeta}$,
\begin{equation}
  \R{\mathrm{rob}}{h}{\zeta} = \E{\mathbb{P}}{\mathds{1}_{\{x:~ \exists h(\Tilde{x}) \in \mathcal{B}_\zeta(x) | h^\prime(\tilde{x},y) \leq 0\}}}.
\end{equation}
Note the similarity of this definition with proposed in \eqref{equation:R_rob_zeta}.

\begin{theorem}
\label{theorem:main_theorem_multiclass}
\emph{Upper bounded multi-class classifier:} Consider the hypothesis class of scoring functions $\mathcal{H} = \{(x, y) \rightarrow h(x,y)\}$ . We then have that for any $r>0$ and any $\zeta \in ]0, r]$, with probability at least $1-\delta$,
  \begin{align*}
   \R{\mathrm{rob}}{h}{\zeta} \leq & \frac{1}{m} \sum_{i=1}^m \phi_{2, \zeta}(h^\prime(x_i, y_i)) + \frac{2 k \gamma}{\zeta}\Rademacher{m}(\mathcal{H}) \\ 
   & + \sqrt{\frac{\log \log_\gamma \frac{\gamma r}{\zeta}}{m}} + \sqrt{\frac{\log{\frac{2}{\delta}}}{2m}}.
  \end{align*}
\end{theorem}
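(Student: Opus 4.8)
The plan is to mirror the proof of Theorem~\ref{theorem:main_theorem} for the binary case and to isolate the single place where the multi-class structure alters the argument: the passage from the Rademacher complexity of the surrogate-loss class to that of $\mathcal{H}$, which is what produces the extra factor $k$. First I would start from the definition of $\R{\mathrm{rob}}{h}{\zeta}$ in the scoring-function setting and invoke the dominance relation $\R{\mathrm{rob}}{h}{\zeta} \leq \E{\mathbb{P}}{\phi_{2,\zeta}(h'(x,y))}$, exactly as in the binary case: the indicator for a misclassification under a $\zeta$-bounded perturbation is upper bounded by $\phi_{2,\zeta}$ evaluated at the confidence margin $h'(x,y) = h(x,y) - \max_{y'\neq y} h(x,y')$. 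Applying the standard symmetrization and Rademacher uniform-convergence bound to the class $\Phi = \{(x,y) \mapsto \phi_{2,\zeta}(h'(x,y)) : h \in \mathcal{H}\}$ gives, with probability at least $1-\delta$, that $\E{}{\phi_{2,\zeta}(h'(x,y))}$ is controlled by its empirical average plus $2\Rademacher{m}(\Phi)$ and a $\sqrt{\log(1/\delta)/(2m)}$ concentration term.

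Next I would peel off the surrogate loss. Since $\phi_{2,\zeta}$ is $1/\zeta$-Lipschitz, Talagrand's contraction lemma yields $\Rademacher{m}(\Phi) \leq \frac{1}{\zeta} \Rademacher{m}(\mathcal{H}')$, where $\mathcal{H}' = \{(x,y) \mapsto h'(x,y) : h\in\mathcal{H}\}$. The crux is then to bound $\Rademacher{m}(\mathcal{H}')$ in terms of $\Rademacher{m}(\mathcal{H})$. Writing $h'(x,y) = h(x,y) - \max_{y'\neq y} h(x,y')$, the first term is a single coordinate of the scoring function and is immediately controlled by $\Rademacher{m}(\mathcal{H})$; the maximum over the remaining $k-1$ classes is the genuinely multi-class piece. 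I would handle it by bounding the Rademacher complexity of the maximum of the coordinate functions by the sum of their individual complexities (the standard sub-additivity inequality for the $\max$ of hypothesis classes), which contributes a factor proportional to $k$. Combining the two pieces gives $\Rademacher{m}(\mathcal{H}') \leq k\,\Rademacher{m}(\mathcal{H})$, and hence the term $\frac{2k}{\zeta}\Rademacher{m}(\mathcal{H})$.

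The hard part is precisely this max-handling step: unlike the scalar $1/\zeta$-Lipschitz contraction of $\phi_{2,\zeta}$, the $\max_{y'\neq y}$ operator acts on a vector of $k-1$ score components, so the scalar contraction lemma does not apply directly and one must pay a factor that scales with the number of classes; some care is also needed because the set over which the maximum is taken depends on the label $y$, which forces one to work with the full family of label-coordinate projections of $\mathcal{H}$ rather than a single one.

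Once $\Rademacher{m}(\mathcal{H}')\le k\,\Rademacher{m}(\mathcal{H})$ is in hand, the remainder is routine: the uniform-over-$\zeta$ statement follows by the same discretization and peeling argument as in Theorem~\ref{theorem:main_theorem}, choosing a geometric grid $\zeta_j = r/\gamma^{\,j}$, assigning confidence budgets $\delta_j$ that sum to $\delta$, and taking a union bound. This produces the $\gamma$ factor multiplying the complexity term, yielding $\frac{2k\gamma}{\zeta}\Rademacher{m}(\mathcal{H})$, together with the additional $\sqrt{\log\log_\gamma(\gamma r/\zeta)/m}$ term and the final $\sqrt{\log(2/\delta)/(2m)}$ concentration term.
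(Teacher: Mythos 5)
Your proposal is correct, and its skeleton coincides with the paper's proof: surrogate-loss dominance, the Rademacher uniform-convergence bound, Talagrand contraction producing the $1/\zeta$ factor, an intermediate inequality of the form $\Rademacher{m}(\mathcal{H}^\prime) \leq k\,\Rademacher{m}(\mathcal{H})$, and then verbatim the same geometric-grid/union-bound argument over $\zeta$ as in Theorem~\ref{theorem:main_theorem}. The single point of divergence is the step you rightly flag as the crux: extracting the factor $k$ from the term $\max_{y^\prime \neq y} h(x,y^\prime)$. The paper does this by splitting the supremum into two expectations (using symmetry of the Rademacher variables), recombining them with indicators $\mathds{1}_{y=y_i}$ and $\mathds{1}_{y\neq y_i}$, bounding the resulting expression by $k\,|h(x_i,y_i)|$, and only then applying Talagrand's lemma. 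You instead invoke sub-additivity of the (empirical) Rademacher complexity under pointwise maxima --- e.g., via $\max(f,g) = \tfrac{1}{2}\big(f+g+|f-g|\big)$ together with contraction --- applied to the $k-1$ label-coordinate projections plus the coordinate $h(\cdot,y)$ itself. Both routes land on the same inequality, but yours is the standard textbook argument for multi-class margin bounds (cf.~\cite{Book_ML_Mohri2018}) and is logically tighter: the paper's chain writes as equalities steps that are really inequalities (supremum of a sum versus sum of suprema), and the passage to $k|h(x_i,y_i)|$ is asserted without justification, whereas sub-additivity of the max is a self-contained lemma. Your closing caveat --- that the maximum ranges over a label-dependent set, so one must control the whole family of coordinate projections of $\mathcal{H}$ rather than a single one --- is precisely the care that the paper's own write-up glosses over.
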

\begin{proof}
The proof follows the same steps as that of Theorem \ref{theorem:main_theorem}, only deviating on the bounding of the Rademacher complexity $\Rademacher{m}(\mathcal{H}^\prime)$. The proof is provided in Appendix \ref{appendix:proofs:theorem:main_theorem_multiclass}.
\end{proof}

The result is similar to Theorem \ref{theorem:main_theorem}, with only a scaling constant $k$ difference. With more classes, a larger sample size is required for model training, and this dependency is linear with the number of classes.

\subsection{Classifier computation}
\label{subsection:algorithms}
In this section we discuss how to compute linear and kernel-based classifiers using empirical data that enjoy the probabilistic guarantees of Corollaries \ref{corollary:linear_with_bounded} and \ref{corollary:kernel}, respectively.


\subsubsection{Linear programming formulation for linear binary classifiers} \label{case:class_linear_compute}
Let $\norm{x^\prime}_\infty \leq u = r$, $\norm{w}_1 \leq v = 1$, and $r=1$, i.e., we normalize the input to its maximum value, and further consider a powerful adversary, such that $\xi = r = 1$. By Corollary \ref{corollary:linear_with_bounded}, considering the explicit expression of the surrogate function, have that with confidence at least $1 - \delta$, for any linear classifier parameterized by $w \in \mathbb{R}^{d+1}$,
\begin{mini*}|s|
    {\norm{w}_1 \leq 1}
    {\frac{1}{m} \sum_{i=1}^m \big(2 - y_iw^Tx_i^\prime \big)_+}
    {}{\Opt{min}{w}{\R{\mathrm{rob}}{h}{}}{} -}
\end{mini*}
\begin{equation}
       \leq \sqrt{\frac{\log{\frac{2}{\delta}}}{2m}} + 1.
\end{equation} 
where we omitted the superscript $\zeta$ in $\R{\mathrm{rob}}{h}{}$ since $\zeta = 1$ based on the discussion above.
We get this simplified bound by setting $\gamma = \sqrt{m}/2$, while the requirement of $\gamma >1$ holds for any $m>4$. 

A classifier that enjoys such guarantees can be constructed as the solution of the empirical minimization of the second term in the previous equation. This is a minimization subject to a first norm constraint. We could equivalently recast this as a linear program (LP) by introducing some additional decision variables. The resulting optimization program is given by 
\begin{mini}|s|
    {w, t, l, \in \mathbb{R}^{2(d+1)+m}}
    {\frac{1}{m} \sum_{i=1}^m t_i}
    {}{}
    \addConstraint{2 - \frac{y_iw^Tx_i^\prime}{\xi} \le t_i}
    \addConstraint{t_i, l_j \ge 0}
    \addConstraint{w_j \leq l_j,\ -w_j \leq l_j}
    \addConstraint{\sum_{j=1}^{d+1}l_j \leq 1}{}.
\end{mini}
This result is similar to SVM, however, it involves a different norm. We discuss these similarities in more detail in Appendix \ref{appendix:similarities_to_svm}.

\subsubsection{Second order cone programming formulation for kernel-based binary classifiers} \label{case:class_kernel_compute}
By Corollary \ref{corollary:kernel}, we have that with probability at least $1-\delta$,
\begin{mini*}|s|
    {\alpha^T \mathbf{K} \alpha \leq v^2}
    {\frac{1}{m} \sum_{i=1}^m  \Big(2 - \frac{y_i (\mathbf{K} \alpha)_i}{\zeta} \Big)_+}
    {}{\Opt{min}{w \in \mathbb{H}}{\R{\mathrm{rob}}{h}{}}{} -}
\end{mini*}
\begin{equation}
    \leq \sqrt{\frac{\log{\frac{2}{\delta}}}{2m}}  + 1,
\end{equation} 
where $\alpha \in \mathbb{R}^m$. Similarly to the case of linear classifier, we obtained this by letting $r=1$, taking $u = v= \zeta = r = 1$ and setting $\gamma$ as in the previous section. 

The classifier that enjoys such classification guarantees can be obtained as the solution of the empirical minimization problem that appears as the second term in the previous equation. This can be equivalently written as a second-order cone program (SOCP), given by
\begin{mini}|s|
    {\alpha, t_1,\ldots, t_m  \in \mathbb{R}^{2m}}
    {\frac{1}{m} \sum_{i=1}^m t_i}
    {}{}
    \addConstraint{ y_i (\mathbf{K} \alpha)_i \ge 2 - t_i }
    \addConstraint{\ t_i \ge 0,\forall i \in 1, \ldots, m}
    \addConstraint{  \norm{\mathbf{L} \alpha }_2 \leq v^2},
\end{mini}
where $K=\mathbf{L}^T\mathbf{L}$, that is, $\mathbf{L}$ can be obtained through the Cholesky decomposition of $\mathbf{K}$. 
\section{Numerical experiments}

\subsection{Simulation set-up}
In our numerical examples, we conduct a series of experiments using a binary linear classifier applied to the MNIST and CIFAR10 data-sets. We refer to Appendix \ref{appendix:numerical_examples} for the details about the parameters used in the numerical analysis and a reference to the the Github repository with the available code.

\begin{figure*}[h]
  \centering
  \includegraphics[width=\textwidth]{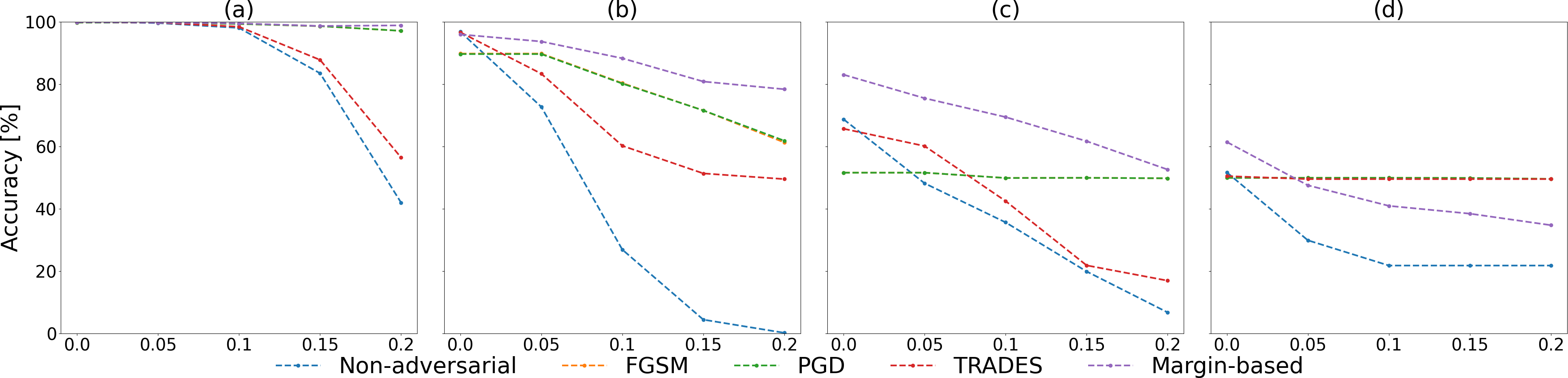}
  \caption{Accuracy of linear classifiers using out-of-sample adversarial tampered data considering non-adversarial training, FGSM \cite{Goodfellow2015}, PGD \cite{Madry2019}, TRADES [$\lambda=1.0$] \cite{TRADES_ZhangElGhaoui2019} and proposed margin-based approach. Datasets: (a) NIST 0/1, (b) NIST 3/8, (c) CIFAR10 Airplane/Dog and (d) CIFAR10 Cat/Dog.} 
  \label{fig:mnist_cifar}
\end{figure*}

We employ the proportion of correctly classified instances in an out-of-sample test set as our accuracy metric for performance evaluation. Our evaluation extends beyond the conventional test case, as all instances in the test set are manipulated with adversarial examples crafted to mislead the model. Despite this challenging scenario, our margin-based model showcases competitive performance, on par with state-of-the-art adversarial training techniques, across both datasets.

\subsection{Simulation results}
When evaluating the NIST dataset, our goal is to differentiate between two digit pairs (0/1 and 3/8) using a binary classifier. Figures \ref{fig:mnist_cifar} (a) and (b) show that standard training is highly vulnerable to adversarial perturbations. For the 0/1 case, both our margin-based approach and the FGSM adversarial training achieve excellent performance, with the latter slightly outperforming under stronger attacks. In the more challenging 3/8 case, our margin-based approach clearly outshines other adversarial training methods.


\begin{table}[h]
    \centering
    \begin{tabular}{cccc}
        \hline
        Data & $\xi$ & Non-adv. [$\%$] & Margin-based [$\%$]\\
        \hline
          \multirow{4}{*}{0/1} &  0.05 & 99.95 (2.17) & 99.91 (3.07) \\
          & 0.10 & 99.95 (2.20) & 99.85 (3.64) \\
          & 0.15 & 99.94 (2.24) & 99.19 (8.33)\\
          & 0.20 & 99.94 (2.27) & 99.52 (5.64)\\
          \hline
          \multirow{4}{*}{3/8} & 0.05 & 96.74 (17.38) & 95.96 (19.29)\\
          & 0.10 & 96.70 (17.15) & 92.05 (26.14)\\
          & 0.15 & 96.67 (16.94) & 85.61 (33.39)\\
          & 0.20 & 96.64 (16.71) & 85.99 (31.22) \\
          \hline
          \multirow{4}{*}{Plane/Dog} &  0.05 & 68.73 (46.20) & 83.01 (36.89)\\
          & 0.10 & 68.73 (46.02) & 79.00 (39.25)\\
          & 0.15 & 68.72 (45.87) & 76.86 (39.42)\\
          & 0.20 & 68.69 (45.72) & 71.58 (41.05)\\
          \hline
          \multirow{4}{*}{Cat/Dog} &  0.05 & 51.70 (49.81) & 61.21 (47.12)\\
          & 0.10 & 51.72 (49.68) & 58.13 (45.77)\\
          & 0.15 & 51.74 (49.58) & 57.07 (42.27)\\
          & 0.20 & 51.74 (49.49) & 56.21 (37.76)\\
          \hline
    \end{tabular}
    \label{table:roma}
    \caption{RoMa score:  mean and standard deviation for non-adversarial and margin-based training methods.}
\end{table}

In the context of the CIFAR10 dataset, it is crucial to note that a linear classifier exhibits low accuracy even without any adversarial influence. Specifically, in the cat/dog classification, a linear model struggles even without adversaries, performing no better than chance, as depicted in Figure \ref{fig:mnist_cifar} (d). However, our margin-based approach demonstrates robust classification, achieving notable accuracy even in challenging scenarios such as distinguishing between airplane/dog, outperforming all methods, and surpassing usual training in the cat/dog case, as shown in  Figure \ref{fig:mnist_cifar} charts (c) and (d).

Robustness against adversarial inputs is a crucial factor, evaluated using the RoMA (Robustness Measurement and Assessment) procedure \cite{RoMA_LevyKatz2022}. This method determines the probability of a random input perturbation causing a misclassification, providing guarantees on the expected error frequency post-training \cite{RoMA_LevyKatz2022}. In this metric (Table \ref{table:roma}), both the proposed margin-based and conventional training methods show comparable robustness, especially in scenarios where adversaries find it challenging to execute attacks, as observed in the NIST 0/1 dataset. However, the margin-based approach demonstrates significantly higher robustness on datasets with lower accuracy under conventional training, such as the CIFAR10.

\section{Conclusion}

We focused on robust classification under adversarial attacks and introduced a new method for adversarial training, inspired by SVM margin concepts. We established finite sample complexity bounds that accompany adversarialy trained classifiers with probabilistic error classification guarantees. Moreover, we showed that robust linear and kernel-based binary classifiers can be constructed by means of a linear and a second-order cone program respectively. Extensive numerical validation was provided.
A distinctive feature of the proposed methodology is the ability to achieve high accuracy without generating adversarial examples during training. 


\IEEEpeerreviewmaketitle

\bibliographystyle{ieeetr}
\bibliography{main.bib}

\begin{thebibliography}{10}

\bibitem{NileshDomingos2004}
N.~Dalvi, P.~Domingos, Mausam, S.~Sanghai, and D.~Verma, ``Adversarial
  classification,'' in {\em Proceedings of the Tenth ACM SIGKDD International
  Conference on Knowledge Discovery and Data Mining}, KDD '04, (New York, NY,
  USA), p.~99–108, Association for Computing Machinery, 2004.

\bibitem{Szegedy2014}
C.~Szegedy, W.~Zaremba, I.~Sutskever, J.~Bruna, D.~Erhan, I.~Goodfellow, and
  R.~Fergus, ``Intriguing properties of neural networks,'' 2014.

\bibitem{Goodfellow2015}
I.~J. Goodfellow, J.~Shlens, and C.~Szegedy, ``Explaining and harnessing
  adversarial examples,'' 2015.

\bibitem{Battista2018}
B.~Biggio and F.~Roli, ``Wild patterns: Ten years after the rise of adversarial
  machine learning,'' {\em Pattern Recognition}, vol.~84, pp.~317 -- 331, 2018.

\bibitem{Anirban2018}
A.~Chakraborty, M.~Alam, V.~Dey, A.~Chattopadhyay, and D.~Mukhopadhyay,
  ``Adversarial attacks and defences: {A} survey,'' {\em CoRR},
  vol.~abs/1810.00069, 2018.

\bibitem{Ram2019}
R.~S.~S. Kumar, D.~R. O'Brien, K.~Albert, S.~Vilj{\"{o}}en, and J.~Snover,
  ``Failure modes in machine learning systems,'' {\em CoRR},
  vol.~abs/1911.11034, 2019.

\bibitem{Yuan2019}
X.~Yuan, P.~He, Q.~Zhu, and X.~Li, ``Adversarial examples: Attacks and defenses
  for deep learning,'' {\em IEEE Transactions on Neural Networks and Learning
  Systems}, vol.~30, no.~9, pp.~2805--2824, 2019.

\bibitem{BaiTao2021}
T.~Bai, J.~Luo, J.~Zhao, B.~Wen, and Q.~Wang, ``Recent advances in adversarial
  training for adversarial robustness,'' in {\em Proceedings of the Thirtieth
  International Joint Conference on Artificial Intelligence, {IJCAI-21}} (Z.-H.
  Zhou, ed.), pp.~4312--4321, International Joint Conferences on Artificial
  Intelligence Organization, 8 2021.
\newblock Survey Track.

\bibitem{Kukain2016}
A.~Kurakin, I.~J. Goodfellow, and S.~Bengio, ``Adversarial examples in the
  physical world,'' {\em CoRR}, vol.~abs/1607.02533, 2016.

\bibitem{Madry2019}
A.~Madry, A.~Makelov, L.~Schmidt, D.~Tsipras, and A.~Vladu, ``Towards deep
  learning models resistant to adversarial attacks,'' 2019.

\bibitem{Papernot2016}
N.~{Papernot}, P.~{McDaniel}, X.~{Wu}, S.~{Jha}, and A.~{Swami}, ``Distillation
  as a defense to adversarial perturbations against deep neural networks,'' in
  {\em 2016 IEEE Symposium on Security and Privacy (SP)}, pp.~582--597, 2016.

\bibitem{Robey2022}
A.~Robey, L.~Chamon, G.~J. Pappas, and H.~Hassani, ``Probabilistically robust
  learning: Balancing average and worst-case performance,'' in {\em Proceedings
  of the 39th International Conference on Machine Learning} (K.~Chaudhuri,
  S.~Jegelka, L.~Song, C.~Szepesvari, G.~Niu, and S.~Sabato, eds.), vol.~162 of
  {\em Proceedings of Machine Learning Research}, pp.~18667--18686, PMLR,
  17--23 Jul 2022.

\bibitem{Book_CampiGaratti2018}
M.~C. Campi and S.~Garatti, {\em Introduction to the Scenario Approach}.
\newblock Philadelphia, PA: Society for Industrial and Applied Mathematics,
  2018.

\bibitem{CampiGaratti2008}
M.~C. Campi and S.~Garatti, ``The exact feasibility of randomized solutions of
  uncertain convex programs,'' {\em SIAM Journal on Optimization}, vol.~19,
  no.~3, pp.~1211--1230, 2008.

\bibitem{GarattiCampi2019}
S.~Garatti and M.~C. Campi, ``Risk and complexity in scenario optimization,''
  {\em Mathematical Programming}, vol.~191, pp.~243--279, Jan 2022[2019].

\bibitem{CampiGaratti2021}
M.~C. Campi and S.~Garatti, ``A theory of the risk for optimization with
  relaxation and its application to support vector machines,'' {\em Journal of
  Machine Learning Research}, vol.~22, no.~288, pp.~1--38, 2021.

\bibitem{Margellos2014}
K.~Margellos, M.~Prandini, and J.~Lygeros, ``A compression learning perspective
  to scenario based optimization,'' in {\em 53rd IEEE Conference on Decision
  and Control}, pp.~5997--6002, 2014.

\bibitem{Campi2023}
M.~C. Campi and S.~Garatti, ``Compression, generalization and learning,'' {\em
  Journal of Machine Learning Research}, vol.~24, no.~339, pp.~1--74, 2023.

\bibitem{Pauli2022}
P.~Pauli, A.~Koch, J.~Berberich, P.~Kohler, and F.~Allgöwer, ``Training robust
  neural networks using lipschitz bounds,'' {\em IEEE Control Systems Letters},
  vol.~6, pp.~121--126, 2022.

\bibitem{Mahyar2019}
M.~Fazlyab, A.~Robey, H.~Hassani, M.~Morari, and G.~Pappas, ``Efficient and
  accurate estimation of lipschitz constants for deep neural networks,'' in
  {\em Advances in Neural Information Processing Systems} (H.~Wallach,
  H.~Larochelle, A.~Beygelzimer, F.~d\textquotesingle Alch\'{e}-Buc, E.~Fox,
  and R.~Garnett, eds.), vol.~32, Curran Associates, Inc., 2019.

\bibitem{CortesVapnik1995}
C.~Cortes and V.~Vapnik, ``Support vector networks,'' {\em Machine Learning},
  vol.~20, pp.~273--297, 1995.

\bibitem{Attias2019}
I.~Attias, A.~Kontorovich, and Y.~Mansour, ``Improved generalization bounds for
  robust learning,'' in {\em Proceedings of the 30th International Conference
  on Algorithmic Learning Theory} (A.~Garivier and S.~Kale, eds.), vol.~98 of
  {\em Proceedings of Machine Learning Research}, pp.~162--183, PMLR, 22--24
  Mar 2019.

\bibitem{DongBarlett2019}
D.~Yin, R.~Kannan, and P.~Bartlett, ``Rademacher complexity for adversarially
  robust generalization,'' in {\em Proceedings of the 36th International
  Conference on Machine Learning} (K.~Chaudhuri and R.~Salakhutdinov, eds.),
  vol.~97 of {\em Proceedings of Machine Learning Research}, pp.~7085--7094,
  PMLR, 09--15 Jun 2019.

\bibitem{Bhattacharjee2021}
R.~Bhattacharjee, S.~Jha, and K.~Chaudhuri, ``Sample complexity of robust
  linear classification on separated data,'' in {\em Proceedings of the 38th
  International Conference on Machine Learning} (M.~Meila and T.~Zhang, eds.),
  vol.~139 of {\em Proceedings of Machine Learning Research}, pp.~884--893,
  PMLR, 18--24 Jul 2021.

\bibitem{Ashtiani2023}
H.~Ashtiani, V.~Pathak, and R.~Urner, ``Adversarially robust learning with
  tolerance,'' in {\em Proceedings of The 34th International Conference on
  Algorithmic Learning Theory} (S.~Agrawal and F.~Orabona, eds.), vol.~201 of
  {\em Proceedings of Machine Learning Research}, pp.~115--135, PMLR, 20
  Feb--23 Feb 2023.

\bibitem{TRADES_ZhangElGhaoui2019}
H.~Zhang, Y.~Yu, J.~Jiao, E.~Xing, L.~E. Ghaoui, and M.~Jordan, ``Theoretically
  principled trade-off between robustness and accuracy,'' in {\em Proceedings
  of the 36th International Conference on Machine Learning} (K.~Chaudhuri and
  R.~Salakhutdinov, eds.), vol.~97 of {\em Proceedings of Machine Learning
  Research}, pp.~7472--7482, PMLR, 09--15 Jun 2019.

\bibitem{SMART_WangZou2020}
Y.~Wang, D.~Zou, J.~Yi, J.~Bailey, X.~Ma, and Q.~Gu, ``Improving adversarial
  robustness requires revisiting misclassified examples,'' in {\em ICLR}, 2020.

\bibitem{Cullina2018}
D.~Cullina, A.~N. Bhagoji, and P.~Mittal, ``Pac-learning in the presence of
  evasion adversaries,'' in {\em Proceedings of the 32nd International
  Conference on Neural Information Processing Systems}, NIPS'18, (Red Hook, NY,
  USA), p.~228–239, Curran Associates Inc., 2018.

\bibitem{FeigeMansour2015}
U.~Feige, Y.~Mansour, and R.~Schapire, ``Learning and inference in the presence
  of corrupted inputs,'' in {\em Proceedings of The 28th Conference on Learning
  Theory} (P.~Grünwald, E.~Hazan, and S.~Kale, eds.), vol.~40 of {\em
  Proceedings of Machine Learning Research}, (Paris, France), pp.~637--657,
  PMLR, 03--06 Jul 2015.

\bibitem{LeeRaginsky2018}
J.~Lee and M.~Raginsky, ``Minimax statistical learning with wasserstein
  distances,'' in {\em Advances in Neural Information Processing Systems}
  (S.~Bengio, H.~Wallach, H.~Larochelle, K.~Grauman, N.~Cesa-Bianchi, and
  R.~Garnett, eds.), vol.~31, Curran Associates, Inc., 2018.

\bibitem{TuZhang2019}
Z.~Tu, J.~Zhang, and D.~Tao, ``Theoretical analysis of adversarial learning: A
  minimax approach,'' in {\em Advances in Neural Information Processing
  Systems}, vol.~32, Curran Associates, Inc., 2019.

\bibitem{Book_ML_ShalevBeDavid2014}
S.~Shalev-Shwartz and S.~Ben-David, {\em Understanding Machine Learning: From
  Theory to Algorithms}.
\newblock USA: Cambridge University Press, 2014.

\bibitem{Book_ML_Mohri2018}
M.~Mohri, A.~Rostamizadeh, and A.~Talwalkar, {\em Foundations of Machine
  Learning}.
\newblock The MIT Press, 2nd~ed., 2018.

\bibitem{RoMA_LevyKatz2022}
N.~Levy and G.~Katz, ``Roma: A method for neural network robustness measurement
  and assessment,'' in {\em Proc. 29th Int. Conf. on Neural Information
  Processing (ICONIP)}, pp.~92--105, November 2022.

\bibitem{Carlini2017}
N.~{Carlini} and D.~{Wagner}, ``Towards evaluating the robustness of neural
  networks,'' in {\em 2017 IEEE Symposium on Security and Privacy (SP)},
  pp.~39--57, 2017.

\bibitem{DeepFool2016}
S.-M. Moosavi-Dezfooli, A.~Fawzi, and P.~Frossard, ``Deepfool: A simple and
  accurate method to fool deep neural networks,'' in {\em 2016 IEEE Conference
  on Computer Vision and Pattern Recognition (CVPR)}, pp.~2574--2582, 2016.

\bibitem{Biggio2011}
B.~Biggio, B.~Nelson, and P.~Laskov, ``Support vector machines under
  adversarial label noise,'' in {\em Proceedings of the Asian Conference on
  Machine Learning} (C.-N. Hsu and W.~S. Lee, eds.), vol.~20 of {\em
  Proceedings of Machine Learning Research}, (South Garden Hotels and Resorts,
  Taoyuan, Taiwain), pp.~97--112, PMLR, 14--15 Nov 2011.

\end{thebibliography}

\appendices
\section{Definitions and Theorems}
\label{appendix:definitions_and_theorems}

\begin{definition}
\label{definition:Rademacher}
  \emph{Empirical Rademacher complexity}
  Let $\mathcal{H}$ be a family of functions mapping from $Z$ to $[a,b]$ and $S = (s_1, s_2, \ldots, s_m)$ a fixed sample of size $m$ with elements in $Z$. Then, the empirical Rademacher complexity of $\mathcal{H}$ with respect to the sample $S$ is defined as,

  \begin{equation*}
    \EmpRademacher{S}(\mathcal{H}) = \E{\mathbb{B}}{
      \Opt{sup}
      {h \in \mathcal{H}}
      {\frac{1}{m} \sum_{i=1}^m \sigma_i h(s_i)}
      {}},
  \end{equation*}
  where $\sigma_i, i=1, \ldots, m$ are iid random variables following a symmetric Bernoulli (also called Rademacher) distribution, that is, $\sigma$ takes values in $\{-1, 1\}$ with probability $\frac{1}{2}$,
  \begin{equation*}
    \mathbb{B}\{\sigma = 1 \} = \mathbb{B}\{\sigma = -1 \} = \frac{1}{2}.
  \end{equation*}

\end{definition} 

\begin{definition}
  \emph{Rademacher complexity} Let $\mathbb{P}$ denote the distribution according to which samples are drawn. For any integer $m \geq 1$, the Rademacher complexity of $\mathcal{H}$ is the expectation of the empirical Rademacher complexity over all samples of size $m$ according to $\mathbb{P}$,
  \begin{equation*}
    \Rademacher{m}(\mathcal{H}) = \E{\mathbb{P}^m}{\EmpRademacher{S}(\mathcal{H})} .
  \end{equation*}

\end{definition} 


\begin{definition}
\label{def:PAC}
  \emph{Agnostic PAC-learning} \cite{Book_ML_Mohri2018} A hypothesis class $\mathcal{H}$ is   called agnostic PAC-learnable if there exists an algorithm $M$, that returns a hypothesis $h_S$ given the training sample $S$ and a polynomial function $p(.,.,. ,. )$ such that for any $\epsilon, \delta> 0$, \emph{for all distributions} $\mathcal{P}$ over $Z = X \times Y$, the following holds for any sample size $m \geq p(1/\epsilon, 1/\delta,dim(X))$ :
  \[
    \mathcal{P}^m\Big\{\E{\mathcal{P}}{\ell (h_S(\chi), \upsilon)} - \Opt{min}{h \in
      \mathcal{H}}{\E{\mathcal{P}}{\ell (h(\chi), \upsilon)}}{} \leq \epsilon \Big\} \geq 1-\delta.
  \]
\end{definition}

\begin{theorem} (Theorem 3.3 at \cite{Book_ML_Mohri2018}) 
\label{theorem:rademacher_bound}
  Let $\mathcal{G}$ be a family of functions mapping from $X \rightarrow [0,1]$. Then, for any $\delta>0$, with probability at least $1-\delta$ over the draw of any iid sample $S$ of size $m$, each of the following holds for all $g \in \mathcal{G}$,
  \begin{align*}
    \E{}{g(x)} &\leq \Eemp{S}{g(s_i)} + 2\Rademacher{m}(\mathcal{G}) + \sqrt{\frac{\log{\frac{1}{\delta}}}{2m}} \\
    \E{}{g(x)} &\leq \Eemp{S}{g(s_i)} + 2\EmpRademacher{S}(\mathcal{G}) + 3\sqrt{\frac{\log{\frac{2}{\delta}}}{2m}},
  \end{align*}
  with $\Eemp{S}{g(s_i)} = \frac{1}{m} \sum_{i=1}^m g(s_i)$.
\end{theorem}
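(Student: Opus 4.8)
The plan is to prove both inequalities through the classical symmetrization argument combined with McDiarmid's bounded-differences inequality. The central object is the one-sided uniform deviation $\Phi(S) = \sup_{g \in \mathcal{G}}\big(\E{}{g(x)} - \Eemp{S}{g(s_i)}\big)$, and the goal is to control its value with high probability by its expectation, and then bound that expectation by the Rademacher complexity.

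First I would establish the concentration step. Since every $g$ takes values in $[0,1]$, replacing a single sample point $s_i$ by any other admissible value alters $\Eemp{S}{g(s_i)}$ by at most $1/m$, uniformly in $g$; taking the supremum preserves this, so $\Phi$ satisfies the bounded-differences condition with constants $c_i = 1/m$. McDiarmid's inequality then yields that, with probability at least $1-\delta$, $\Phi(S) \le \E{S}{\Phi(S)} + \sqrt{\log(1/\delta)/(2m)}$, which already supplies the additive deviation term appearing in the first bound.

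The second step is the symmetrization that bounds $\E{S}{\Phi(S)}$ by $2\Rademacher{m}(\mathcal{G})$. I would introduce an independent ghost sample $S' = (s_1',\ldots,s_m')$ drawn from the same law, write $\E{}{g(x)} = \E{S'}{\Eemp{S'}{g(s_i')}}$, and move the inner expectation outside the supremum via Jensen's inequality, giving $\E{S}{\Phi(S)} \le \E{S,S'}{\sup_{g} \frac{1}{m}\sum_i (g(s_i') - g(s_i))}$. Because $s_i$ and $s_i'$ are exchangeable, multiplying each summand by an independent Rademacher sign $\sigma_i$ leaves the expectation unchanged; splitting the supremum across the two halves and using that $\sigma_i$ and $-\sigma_i$ share a distribution produces two identical copies of $\Rademacher{m}(\mathcal{G})$, hence the factor $2$. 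Combining this with the concentration step delivers the first inequality.

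For the second inequality I would additionally concentrate the empirical Rademacher complexity about its mean. Regarded as a function of $S$, a single-coordinate change again moves $\EmpRademacher{S}(\mathcal{G})$ by at most $1/m$, so McDiarmid gives $\Rademacher{m}(\mathcal{G}) \le \EmpRademacher{S}(\mathcal{G}) + \sqrt{\log(2/\delta)/(2m)}$ with probability at least $1-\delta/2$. Applying the first inequality at confidence $1-\delta/2$ and taking a union bound over the two events, the substitution into $2\Rademacher{m}(\mathcal{G})$ contributes two copies of $\sqrt{\log(2/\delta)/(2m)}$ and the first inequality contributes one, summing to the stated $3\sqrt{\log(2/\delta)/(2m)}$. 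The main obstacle is the symmetrization bookkeeping: justifying the Jensen step, where the ghost-sample expectation must be extracted from inside a supremum using convexity, and verifying that inserting the Rademacher signs is genuinely distribution-preserving rather than merely an upper bound. Once the bounded-differences constants $1/m$ are checked, the remaining applications of McDiarmid are routine.
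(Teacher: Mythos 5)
Your proposal is correct and coincides with the canonical argument behind the cited result: the paper quotes this theorem from Mohri et al.\ without reproving it, and the textbook proof is exactly your route of applying McDiarmid's bounded-differences inequality (with constants $1/m$) to the uniform deviation $\Phi(S)$, symmetrizing with a ghost sample and Rademacher signs to obtain the factor $2\Rademacher{m}(\mathcal{G})$, and then concentrating $\EmpRademacher{S}(\mathcal{G})$ about its mean with a union bound at confidence $\delta/2$ to account for the constant $3$ in the second inequality. Your bookkeeping of the three $\sqrt{\log(2/\delta)/(2m)}$ terms is also exactly right.
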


\begin{theorem} (Theorem 3.5 at \cite{Book_ML_Mohri2018}) 
\label{theorem:rademacher_bound_binary_classifier}
  Let $\mathcal{H}$ be a family of functions taking values in $\{-1,1\}$ and let $\mathcal{D}$ be the distribution over the input space $X$. Then, for any $\delta > 0$, with probability at least $1-\delta$ over a sample $S$ of size $m$ drawn according to $\mathcal{D}$, each of the following holds for any $h \in \mathcal{H}$:
  \begin{align*}
  R(h) &\leq \hat{R}(h) + \Rademacher{m}(\mathcal{H}) + \sqrt{\frac{\log{\frac{1}{\delta}}}{2m}} \\
  R(h) &\leq \hat{R}(h) + \EmpRademacher{S}(\mathcal{H}) + 3\sqrt{\frac{\log{\frac{1}{\delta}}}{2m}}.
  \end{align*}
\end{theorem}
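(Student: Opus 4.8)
The plan is to reduce this binary classification bound to the general Rademacher bound for $[0,1]$-valued functions, namely Theorem~\ref{theorem:rademacher_bound}, applied to a suitable loss class. First I would rewrite the zero-one loss for labels in $\{-1,1\}$ in algebraic form, $\mathds{1}_{\{h(x) \neq y\}} = \frac{1 - y\, h(x)}{2}$, which takes values in $\{0,1\} \subseteq [0,1]$. Defining the loss class $\mathcal{G} = \{(x,y) \mapsto \frac{1 - y\, h(x)}{2} : h \in \mathcal{H}\}$, I observe that the associated $g$ satisfies $\E{}{g} = R(h)$ and $\Eemp{S}{g} = \hat{R}(h)$, so both inequalities of Theorem~\ref{theorem:rademacher_bound} apply verbatim to $\mathcal{G}$, giving $R(h) \leq \hat{R}(h) + 2\Rademacher{m}(\mathcal{G}) + \sqrt{\log(1/\delta)/(2m)}$ and its empirical counterpart with $\EmpRademacher{S}(\mathcal{G})$.

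The crux is then the identity $\Rademacher{m}(\mathcal{G}) = \tfrac{1}{2}\Rademacher{m}(\mathcal{H})$. Starting from the definition of the empirical Rademacher complexity of $\mathcal{G}$, the constant $\tfrac{1}{2}$ additive term inside the supremum contributes nothing, since $\E{\mathbb{B}}{\sigma_i} = 0$, leaving $\EmpRademacher{S}(\mathcal{G}) = \tfrac{1}{2}\E{\mathbb{B}}{\Opt{sup}{h \in \mathcal{H}}{\frac{1}{m}\sum_{i=1}^m (-\sigma_i y_i)\, h(x_i)}{}}$. The key step is recognizing that for each fixed label $y_i \in \{-1,1\}$ the variable $-\sigma_i y_i$ has exactly the same symmetric Bernoulli law as $\sigma_i$; hence the joint distribution of $(-\sigma_1 y_1, \dots, -\sigma_m y_m)$ coincides with that of $(\sigma_1, \dots, \sigma_m)$, and the expectation is invariant under this relabeling. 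This yields $\EmpRademacher{S}(\mathcal{G}) = \tfrac{1}{2}\EmpRademacher{S}(\mathcal{H})$, and taking the expectation over $S \sim \mathbb{P}^m$ gives the population identity $\Rademacher{m}(\mathcal{G}) = \tfrac{1}{2}\Rademacher{m}(\mathcal{H})$.

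Substituting this identity into the two bounds absorbs the factor of $2$ in front of the Rademacher term, producing the single $\Rademacher{m}(\mathcal{H})$ (resp.\ $\EmpRademacher{S}(\mathcal{H})$) of the statement, while the confidence terms carry over directly from the two forms of Theorem~\ref{theorem:rademacher_bound}. I expect the main obstacle to be conceptual rather than computational: justifying the distributional symmetry of $-\sigma_i y_i$ cleanly, since the labels $y_i$ are fixed once we condition on the sample while the $\sigma_i$ are the averaged-out Rademacher variables. The argument must therefore be phrased as invariance of an expectation under a measure-preserving sign flip of the $\sigma_i$, rather than as any statement about randomness in the labels themselves.
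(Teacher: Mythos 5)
Your proposal is correct and is precisely the canonical argument: the paper itself offers no proof of this statement (it is quoted as Theorem 3.5 of the cited Mohri et al.\ text), and the proof there proceeds exactly as you do --- rewriting the zero-one loss as $\frac{1-y\,h(x)}{2}$, establishing $\EmpRademacher{S}(\mathcal{G}) = \tfrac{1}{2}\EmpRademacher{S}(\mathcal{H})$ via the distributional identity of $-\sigma_i y_i$ and $\sigma_i$ (their Lemma 3.4), and then invoking the $[0,1]$-valued bound, i.e.\ the paper's Theorem \ref{theorem:rademacher_bound}. One small caveat: carrying the confidence terms over ``directly'' from Theorem \ref{theorem:rademacher_bound} yields $3\sqrt{\log(2/\delta)/(2m)}$ in the second inequality, not the $3\sqrt{\log(1/\delta)/(2m)}$ printed in the statement above; the printed form appears to be a transcription typo (Mohri's original also reads $\log(2/\delta)$), so your argument establishes the standard, correct version of the result rather than the slightly stronger misprint.
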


\begin{theorem} (Theorem 9.3 at \cite{Book_ML_ShalevBeDavid2014})
  The VC dimension of the class of non-homogeneous half-spaces in $\mathbb{R}^N$ is $N+1$
\end{theorem}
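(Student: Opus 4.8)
The plan is to establish the two inequalities $\mathrm{VCdim}(\mathcal{H}) \geq N+1$ and $\mathrm{VCdim}(\mathcal{H}) \leq N+1$ separately, where $\mathcal{H}$ denotes the class of affine classifiers $x \mapsto \mathrm{sign}(\langle w, x\rangle + b)$ with $w \in \mathbb{R}^N$ and $b \in \mathbb{R}$ (a non-homogeneous half-space being the positive region of such a classifier). For the lower bound I would exhibit an explicit shattered set of $N+1$ points, the natural choice being the origin together with the standard basis vectors, i.e. $x_0 = 0$ and $x_i = e_i$ for $i = 1, \ldots, N$. Given an arbitrary target labeling $(y_0, y_1, \ldots, y_N) \in \{-1, +1\}^{N+1}$, set $b = y_0/2$, so that $\langle w, x_0\rangle + b = b$ has sign $y_0$, and set $w_i = y_i - b$ for each $i$. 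Then $\langle w, e_i\rangle + b = w_i + b = y_i$, whose sign is exactly $y_i$. Every labeling is thus realized, the set is shattered, and $\mathrm{VCdim}(\mathcal{H}) \geq N+1$.

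For the upper bound I would show that no set of $N+2$ points in $\mathbb{R}^N$ can be shattered, with the key tool being Radon's theorem: any collection of $N+2$ points in $\mathbb{R}^N$ admits a partition into disjoint subsets $X_1, X_2$ whose convex hulls share a common point $p$. I would then argue that the labeling assigning $+1$ to every point of $X_1$ and $-1$ to every point of $X_2$ cannot be realized by any affine half-space. Indeed, if some $(w, b)$ realized it, then $\langle w, x\rangle + b > 0$ for all $x \in X_1$ and $\langle w, x\rangle + b < 0$ for all $x \in X_2$; writing $p$ as a convex combination of points of $X_1$ and using linearity of the map $x \mapsto \langle w, x\rangle + b$ gives $\langle w, p\rangle + b > 0$, whereas writing $p$ as a convex combination of points of $X_2$ gives $\langle w, p\rangle + b < 0$, a contradiction. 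Hence no set of $N+2$ points is shattered and $\mathrm{VCdim}(\mathcal{H}) \leq N+1$; combining the two bounds yields $\mathrm{VCdim}(\mathcal{H}) = N+1$.

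The main obstacle is the upper bound: the lower bound is a direct, elementary construction, whereas excluding the shattering of \emph{every} configuration of $N+2$ points requires the non-constructive geometric input of Radon's theorem, which is the substantive ingredient of the proof. A minor technical point to handle carefully is the boundary convention (strict versus non-strict inequality in the definition of the half-space); this affects at most the points lying exactly on the separating hyperplane, but it does not alter the conclusion, since the convex-combination argument produces a strict sign contradiction regardless of the convention adopted.
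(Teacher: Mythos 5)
Your proof is correct, but note that the paper contains no proof of this statement to compare against: it is imported verbatim as Theorem 9.3 of the cited textbook \cite{Book_ML_ShalevBeDavid2014} and used as a black box (to bound the Rademacher complexity of the class of affine classifiers in Section \ref{case:class_linear}). Measured against the textbook's own argument, your lower bound is the same explicit construction, shattering $\{0, e_1, \ldots, e_N\}$ with an explicit $(w,b)$ for each labeling. Your upper bound takes a genuinely different (though closely related) route: the textbook reduces non-homogeneous half-spaces in $\mathbb{R}^N$ to homogeneous half-spaces in $\mathbb{R}^{N+1}$ by appending a constant coordinate $1$, and then applies a linear-dependence argument to show that $N+2$ points in $\mathbb{R}^{N+1}$ cannot be shattered by homogeneous half-spaces; you instead invoke Radon's theorem directly in $\mathbb{R}^N$ and derive the contradiction at the common point of the two convex hulls. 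The two arguments are essentially equivalent in content---Radon's theorem is itself proved by the same linear-dependence computation applied to the affinely extended vectors $(x_i, 1)$---but yours stays entirely in the non-homogeneous setting and avoids the detour through the homogeneous class, at the cost of assuming Radon's theorem as a known geometric input. Your handling of the boundary convention is also sound: whichever sign is assigned to points lying exactly on the hyperplane, at least one of the two convex-combination inequalities at the Radon point is strict, so the contradiction survives.
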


\begin{theorem} (Corollary 3.19 at \cite{Book_ML_Mohri2018})
\label{Theorem:VCdim_binary}
  Let $\mathcal{H}$ be a family of functions taking values in $\{ -1, +1\}$ with VC-dimension $d$. Then, for any $\delta > 0$, with probability at least $1-\delta$, the following holds for all $h \in \mathcal{H}$:
    \begin{equation*}
    \R{}{h}{} \leq \Remp{S}{h} + \sqrt{\frac{2d\log{\frac{em}{d}}}{m}} + \sqrt{\frac{\log{\frac{1}{\delta}}}{2m}}
  \end{equation*}
\end{theorem}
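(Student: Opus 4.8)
The plan is to combine the Rademacher-complexity generalization bound already available as Theorem \ref{theorem:rademacher_bound_binary_classifier} with a purely combinatorial bound on the Rademacher complexity in terms of the VC-dimension. Concretely, the first inequality of Theorem \ref{theorem:rademacher_bound_binary_classifier}, applied with the family $\mathcal{H}$ of $\{-1,+1\}$-valued functions, gives with probability at least $1-\delta$ and uniformly over all $h \in \mathcal{H}$,
\[
\R{}{h}{} \leq \Remp{S}{h} + \Rademacher{m}(\mathcal{H}) + \sqrt{\frac{\log\frac{1}{\delta}}{2m}}.
\]
Hence it suffices to establish the deterministic estimate $\Rademacher{m}(\mathcal{H}) \leq \sqrt{2d\log(em/d)/m}$; the stated inequality then follows by direct substitution.

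To bound the Rademacher complexity, I would first fix the sample $S = (s_1,\dots,s_m)$ and observe that, since every $h \in \mathcal{H}$ takes values in $\{-1,+1\}$, the restriction of $\mathcal{H}$ to these points yields only finitely many distinct vectors in $\{-1,+1\}^m$. The number of such vectors is, by definition, the growth function $\Pi_{\mathcal{H}}(m)$. Consequently, the empirical Rademacher complexity $\EmpRademacher{S}(\mathcal{H})$ is the Rademacher average of a finite subset of $\mathbb{R}^m$, each of whose elements has Euclidean norm exactly $\sqrt{m}$.

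The key step is Massart's finite-class lemma, which bounds the Rademacher average of a finite set $A \subseteq \mathbb{R}^m$ of cardinality $N$ whose members have norm at most $\rho$ by $\rho\sqrt{2\log N}/m$. Applying it with $N = \Pi_{\mathcal{H}}(m)$ and $\rho = \sqrt{m}$ gives $\EmpRademacher{S}(\mathcal{H}) \leq \sqrt{2\log\Pi_{\mathcal{H}}(m)}/\sqrt{m}$. I would then invoke Sauer's lemma, which for a class of VC-dimension $d$ yields $\Pi_{\mathcal{H}}(m) \leq (em/d)^d$ whenever $m \geq d$, so that $\log\Pi_{\mathcal{H}}(m) \leq d\log(em/d)$. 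Substituting produces $\EmpRademacher{S}(\mathcal{H}) \leq \sqrt{2d\log(em/d)/m}$; since this bound is independent of $S$, taking the expectation over the draw of $S$ preserves it and gives the same bound on $\Rademacher{m}(\mathcal{H})$.

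Plugging this into the Rademacher generalization bound above yields exactly the claimed inequality. The essential content rests entirely in the two classical combinatorial lemmas — Massart's lemma, whose proof uses a Hoeffding-type bound on the moment generating function of a Rademacher sum together with Jensen's inequality, and Sauer's lemma, proved by a double induction on $m$ and $d$ — so the main (mild) obstacle is simply assembling these ingredients; once they are in hand the derivation is a routine substitution, and no novel argument is required beyond what is standard in \cite{Book_ML_Mohri2018}.
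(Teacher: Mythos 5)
Your proposal is correct and follows exactly the standard route — this statement is imported from \cite{Book_ML_Mohri2018} (Corollary 3.19) without proof in the paper, and the textbook derivation is precisely your assembly: the Rademacher generalization bound of Theorem \ref{theorem:rademacher_bound_binary_classifier}, then Massart's finite-class lemma applied to the restriction of $\mathcal{H}$ to the sample (growth function $\Pi_{\mathcal{H}}(m)$, norm $\sqrt{m}$), then Sauer's lemma in the form $\Pi_{\mathcal{H}}(m) \leq (em/d)^d$ for $m \geq d$. Your reconstruction, including the implicit $m \geq d$ requirement and the observation that the sample-independent bound survives the expectation over $S$, matches the cited proof in all essentials.
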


\begin{lemma} \emph{Talagrand's lemma} (Lemma 5.7 at \cite{Book_ML_Mohri2018}) Let $\Phi_1, \ldots, \Phi_m$
\label{lemma:talagrand}
  be $l$-Lipschitz functions from $\mathbb{R} \rightarrow \mathbb{R}$ and $\sigma_1, \ldots, \sigma_m$ be Rademacher random variables. Then, for any hypothesis set $\mathcal{H}$ of real-valued functions, the following inequality holds,
    \begin{align*}
    \frac{1}{m} & \E{\mathbb{Q}^m}{
      \Opt{sup}
      {h \in \mathcal{H}}
      {\sum_{i=1}^m \sigma_i (\Phi_i \circ h)(z_i)}
      {}} \leq \\
    & \frac{l}{m} \E{\mathbb{Q}^m}{
      \Opt{sup}
      {h \in \mathcal{H}}
      {\sum_{i=1}^m \sigma_i h(z_i)}
      {}} = l \EmpRademacher{S}(\mathcal{H}).
  \end{align*}
  In particular, if $\Phi_i = \Phi$ for all $i \in {1, \ldots, m}$, then the following holds,
  \begin{equation*}
    \EmpRademacher{S}(\Phi \circ \mathcal{H}) \leq l \EmpRademacher{S}(\mathcal{H}).
  \end{equation*}
\end{lemma}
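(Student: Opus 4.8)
The plan is to establish this contraction inequality by first reducing to the unit-Lipschitz case and then peeling off the Rademacher variables one coordinate at a time. First I would rescale: since each $\Phi_i$ is $l$-Lipschitz, the map $\Phi_i/l$ is $1$-Lipschitz, so it suffices to prove the bound with $l=1$ and reinstate the factor $l$ at the end. Fixing the sample $z_1,\dots,z_m$, the statement becomes an inequality between two expectations over the independent Rademacher signs $\sigma_1,\dots,\sigma_m$, and I would prove it by induction on the number of ``contracted'' coordinates.

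The heart of the argument is a single-coordinate comparison. I would condition on $\sigma_2,\dots,\sigma_m$, absorb their contribution into a fixed function $U(h)=\sum_{i\ge 2}\sigma_i(\Phi_i\circ h)(z_i)$, and take the expectation over $\sigma_1$ alone. Because $\sigma_1$ is symmetric Bernoulli, this expectation equals $\tfrac12\big[\sup_h(U(h)+(\Phi_1\circ h)(z_1))+\sup_h(U(h)-(\Phi_1\circ h)(z_1))\big]$. For any $\varepsilon>0$ I would pick near-maximizers $h,h'$ of the two suprema, add the corresponding inequalities, and then invoke the Lipschitz property through $(\Phi_1\circ h)(z_1)-(\Phi_1\circ h')(z_1)\le |h(z_1)-h'(z_1)|=s\,(h(z_1)-h'(z_1))$ with $s=\operatorname{sign}(h(z_1)-h'(z_1))\in\{-1,1\}$. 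This replaces the contracted term $\Phi_1\circ h$ by the raw value $h(z_1)$ at the cost of the sign $s$, giving the bound $\sup_h(U(h)+s\,h(z_1))+\sup_{h'}(U(h')-s\,h'(z_1))$; by the symmetry of the Rademacher distribution this sum of suprema is exactly $2\,\mathbb{E}_{\sigma_1}[\sup_h(U(h)+\sigma_1 h(z_1))]$. Letting $\varepsilon\to 0$ then yields the one-coordinate comparison.

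I would next take the expectation over $\sigma_2,\dots,\sigma_m$ and repeat the identical peeling on each remaining coordinate in turn; after $m$ steps every $\Phi_i\circ h$ has been replaced by $h$, which is precisely $m\,\EmpRademacher{S}(\mathcal{H})$ once the factor $l$ is restored, and the special case $\Phi_i=\Phi$ for all $i$ follows immediately. The main obstacle I anticipate is the single-coordinate step: the supremum need not be attained, so the $\varepsilon$-argument must be carried out over two possibly distinct hypotheses $h,h'$, and the crucial move is recognizing that the Lipschitz bound together with the freedom in the sign $s$ lets the contracted term be dominated by a symmetrized linear term matching the Rademacher expectation. Correctly pairing the two near-maximizers and exploiting the sign symmetry of $\sigma_1$ and the Lipschitz inequality simultaneously is the delicate point; the remainder is routine induction and bookkeeping.
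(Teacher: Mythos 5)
Your proposal is correct: the rescaling to the $1$-Lipschitz case, the single-coordinate comparison via near-maximizers $h,h'$ with the sign trick $s=\operatorname{sign}(h(z_1)-h'(z_1))$, and the coordinate-by-coordinate peeling are exactly the standard contraction argument. Note that the paper itself offers no proof of this lemma --- it imports it verbatim as Lemma 5.7 of the cited Mohri--Rostamizadeh--Talwalkar text --- and your reconstruction coincides with the proof given in that reference, so there is nothing that differs in approach.
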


\begin{theorem} \label{theorem:rkhs} \emph{Reproducing kernel Hilbert space (RKHS)} Let $K: X \times X \rightarrow \mathbb{R}$ be a PDS kernel. Then, there exists a Hilbert space $\mathbb{H}$ of functions $f$ and a mapping $\psi: X \rightarrow \mathbb{H}$ such that: 
\[
\forall x,x^\prime \in X,\  K(x, x^\prime) = \langle \psi(x), \psi(x^\prime) \rangle.
\] 
Furthermore, $\mathbb{H}$ has the following property known as the reproducing property:
\[
\forall f \in \mathbb{H},\ \forall x \in X,\  f(x) = \langle f, K(x, \cdot) \rangle.
\] 
$\mathbb{H}$ is called the reproducing kernel Hilbert space (RKHS) associated to K.
\end{theorem}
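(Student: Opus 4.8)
The plan is to prove the statement constructively, following the classical Moore--Aronszajn argument: I would build $\mathbb{H}$ explicitly from the kernel $K$ and take the feature map to be $\psi(x) = K(x,\cdot)$, so that the final identities fall out of a reproducing property established along the way. First I would introduce the pre-Hilbert space $\mathbb{H}_0$ consisting of all finite linear combinations $f = \sum_i \alpha_i K(x_i,\cdot)$ with $\alpha_i \in \mathbb{R}$ and $x_i \in X$, and equip it with the candidate bilinear form $\langle f, g \rangle = \sum_{i,j} \alpha_i \beta_j K(x_i, x_j')$ for $g = \sum_j \beta_j K(x_j',\cdot)$. Symmetry of the form follows from the symmetry of $K$, and bilinearity is immediate. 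The one genuinely necessary check here is that the form is \emph{well defined}, i.e. independent of the chosen representations of $f$ and $g$; this follows by rewriting $\langle f, g\rangle = \sum_i \alpha_i g(x_i) = \sum_j \beta_j f(x_j')$, expressions that depend only on $f$ and $g$ as functions.

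Next I would establish the reproducing property on $\mathbb{H}_0$. Setting $g = K(x,\cdot)$ in the identity above gives $\langle f, K(x,\cdot)\rangle = \sum_i \alpha_i K(x_i, x) = f(x)$, which is precisely the reproducing property for $f \in \mathbb{H}_0$. Combined with the PDS assumption (Definition \ref{definition:pds_kernel}), which guarantees $\langle f, f\rangle = \sum_{i,j}\alpha_i \alpha_j K(x_i, x_j) \geq 0$, this promotes the form to a genuine inner product: if $\langle f, f\rangle = 0$, then for every $x$ the Cauchy--Schwarz inequality yields $|f(x)|^2 = |\langle f, K(x,\cdot)\rangle|^2 \leq \langle f, f\rangle\, K(x,x) = 0$, so $f \equiv 0$ and positive definiteness holds.

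Finally I would complete $\mathbb{H}_0$ with respect to this inner product to obtain the Hilbert space $\mathbb{H}$, and then read off the claims. Setting $\psi(x) = K(x,\cdot) \in \mathbb{H}$ gives $\langle \psi(x), \psi(x')\rangle = \langle K(x,\cdot), K(x',\cdot)\rangle = K(x,x')$ directly from the reproducing property, which is the first asserted identity, and the reproducing property itself is the second. The step I expect to be the main obstacle is verifying that the abstract metric completion can be \emph{realized as a space of functions} on $X$, so that point evaluation and the reproducing property survive the limit rather than living only on equivalence classes of Cauchy sequences. The key estimate is that evaluation at any $x$ is a bounded linear functional on $\mathbb{H}_0$, since $|f(x)| \leq \|f\|_{\mathbb{H}}\sqrt{K(x,x)}$; hence every Cauchy sequence in $\mathbb{H}_0$ converges pointwise, its pointwise limit is a well-defined function, one checks this identification of the completion with a function space is injective, and the inner product together with the reproducing property extend by continuity. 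This realization step is where the care lies, but once it is in place the remaining identities are immediate.
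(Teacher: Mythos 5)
This theorem is stated in the paper purely as classical background (it is the Moore--Aronszajn theorem, quoted from the cited textbook \cite{Book_ML_Mohri2018}) with no proof given, so there is no internal argument to compare against. Your construction --- the pre-Hilbert space of finite combinations $\sum_i \alpha_i K(x_i,\cdot)$, well-definedness of the form via $\langle f,g\rangle=\sum_i \alpha_i g(x_i)$, the reproducing property already on the dense subspace, Cauchy--Schwarz for the positive semi-definite form to upgrade to definiteness, and the completion realized as a genuine space of functions through boundedness of point evaluations $|f(x)|\leq \|f\|_{\mathbb{H}}\sqrt{K(x,x)}$ --- is exactly the standard proof of this result and is correct, including your identification of the one genuinely delicate step (checking that the abstract completion embeds injectively as a function space so the reproducing property survives the limit).
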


\begin{theorem} \label{theorem:representer} \emph{Representer theorem} Let $K: X \times X \rightarrow \mathbb{R}$ be a PDS kernel and $\mathbb{H}$ its corresponding RKHS. Then for any non-decreasing function $G: \mathbb{R} \rightarrow \mathbb{R}$ and any loss function $L: \mathbb{R}^m \rightarrow \mathbb{R} \cup {+ \infty}$, the optimization problem
\[
\Opt{argmin}{f \in \mathbb{H}}{F(h)}{} = \Opt{argmin}{f \in \mathbb{H}}{G(\norm{f}_\mathbb{H}) + L(h(x_1), \ldots, h(x_m))}{} 
\]
admits a solution of the form $f^*=\sum_{i=1}^m\alpha_i K(x_i, \cdot)$. If $G$ is further assumed to be increasing, then any solution has this form.
\end{theorem}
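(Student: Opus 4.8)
The plan is to exploit the orthogonal decomposition of $\mathbb{H}$ induced by the finite-dimensional subspace spanned by the kernel sections at the sample points, combined with the reproducing property of Theorem \ref{theorem:rkhs}. Concretely, I would set $V = \mathrm{span}\{K(x_1, \cdot), \ldots, K(x_m, \cdot)\} \subseteq \mathbb{H}$ and, since $V$ is finite-dimensional and hence closed, write any candidate $f \in \mathbb{H}$ uniquely as $f = f_\parallel + f_\perp$ with $f_\parallel \in V$ and $f_\perp \in V^\perp$.

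First I would show that the loss term depends on $f$ only through $f_\parallel$. By the reproducing property, for each sample index $i$ we have $f(x_i) = \langle f, K(x_i, \cdot)\rangle$; decomposing $f$ and using $\langle f_\perp, K(x_i,\cdot)\rangle = 0$ (because $K(x_i,\cdot) \in V$), this yields $f(x_i) = \langle f_\parallel, K(x_i,\cdot)\rangle = f_\parallel(x_i)$. Hence the evaluation vector $(f(x_1), \ldots, f(x_m))$ entering $L$ is unchanged when $f$ is replaced by $f_\parallel$, so the term $L(f(x_1),\ldots,f(x_m))$ is identical for $f$ and $f_\parallel$ regardless of the form of $L$.

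Next I would control the regularizer. Orthogonality gives the Pythagorean identity $\norm{f}_\mathbb{H}^2 = \norm{f_\parallel}_\mathbb{H}^2 + \norm{f_\perp}_\mathbb{H}^2$, so $\norm{f}_\mathbb{H} \geq \norm{f_\parallel}_\mathbb{H}$, and since $G$ is non-decreasing, $G(\norm{f}_\mathbb{H}) \geq G(\norm{f_\parallel}_\mathbb{H})$. Combining the two observations gives $F(f) \geq F(f_\parallel)$, so projecting onto $V$ never increases the objective; a minimizer may therefore always be taken in $V$, and every element of $V$ is by construction of the form $\sum_{i=1}^m \alpha_i K(x_i, \cdot)$. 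For the strengthened claim, if $G$ is increasing then $f_\perp \neq 0$ forces $\norm{f}_\mathbb{H} > \norm{f_\parallel}_\mathbb{H}$ and hence $F(f) > F(f_\parallel)$, so no minimizer can carry a nonzero orthogonal component, which forces every solution into $V$.

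The only genuine subtlety, and the step I would treat most carefully, is the well-definedness of the orthogonal projection: I must invoke finite-dimensionality (equivalently closedness) of $V$ to guarantee that the decomposition $f = f_\parallel + f_\perp$ exists inside the possibly infinite-dimensional space $\mathbb{H}$. Everything else follows directly from the reproducing property and the Pythagorean identity, and no structure on $L$ beyond its dependence on $f$ solely through the sample evaluations is required.
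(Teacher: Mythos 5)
Your proof is correct. The paper itself gives no proof of this statement---it is quoted in the appendix as a background result from the literature (cf.\ \cite{Book_ML_Mohri2018})---and your orthogonal-projection argument (decompose $f = f_\parallel + f_\perp$ against $V = \mathrm{span}\{K(x_i,\cdot)\}_{i=1}^m$, use the reproducing property to show $L$ sees only $f_\parallel$, and the Pythagorean identity with monotonicity of $G$ to conclude) is exactly the standard proof given in that source, so there is nothing further to reconcile.
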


\section{Adversarial attacks}
\label{appendix:attacks}

\subsection{Fast gradient sign method (FGSM)}
\label{appendix:attacks:fgsm}
FGSM is an attack for an $\ell_\infty$-bounded adversary \cite{Goodfellow2015} and computes an adversarial example as,
\[
x + \xi \cdot \text{sign}(\nabla_x \ell(h_\theta(x, y)).
\]

One interpretation is that this attack is a simple one-step scheme for maximizing the inner part of the adversarial problem.

\subsection{Projected gradient descent (PGD)}
\label{appendix:attacks:pgd}
A more powerfull attack \cite{Madry2019} is a multi-step variant of the FGSM,
\[
x_{t+1} = \Pi_{\mathcal{B}_\xi(x)} (x_t + \eta \cdot \text{sign}(\nabla_x \ell(h_\theta(x, y))).
\]

\subsection{Carlini and Wagner (CW)}
\label{appendix:attacks:cw}
This attack was proposed in \cite{Carlini2017} as a response to one approach to defend against adversarial attacks,
\begin{mini*}|s|
    {\xi}
    {\norm{\xi}_p + c f(x + \xi)}
    {}{}
    \addConstraint{x + \xi \in [0,1]^n}.
\end{mini*}

The $\norm{.}_p$ measures the distance of the adversarial perturbation and the function $f$ denotes a customized adversarial loss satisfying $f(x + \xi) \leq 0$. The constraint $x + \xi \in [0,1]^n$ ensures that the image generated is a valid one.

\subsection{DeepFool (DF)}
\label{appendix:attacks:df}
DeepFool is based on an iterative linearization of the classifier to generate minimal perturbations that are sufficient to change classification labels \cite{DeepFool2016}.  Specifically, at each iteration, $h$ is linearized around the current point $x$ and the minimal perturbation of the linearized classifier is computed as,
\begin{mini*}|s|
    {\xi}
    {\norm{\xi}_2}
    {}{}
    \addConstraint{h(x) + \nabla_f(x)^T \xi = 0}.
\end{mini*}
\section{Adversarial training examples}
\label{appendix:examples}

\subsection{Linear regression}
\label{appendix:linear_regression}

Consider now the hypothesis class of linear functionals, $\mathcal{H} = \{h_{a, b}: x \rightarrow a^T x + b \ \big| \ a \in \mathbb{R}^d, b \in \mathbb{R} \}$. The learner wishes to find $a,b$ that solves the following:

\begin{align*}
  &\Opt{min}{a,b \in R^{d+1}}{\frac{1}{2}{\E{\mathbb{P}}{\norm{y - (a^Tx + b)}_2^2}}}{}
\end{align*}

To train against adversarial attacks the learner considers the robust counterpart of the above problem. This is referred to adversarial risk minimization in the literature:
\begin{maxi*}|s|
    {\delta}
     {\frac{1}{2}\norm{y - (a^T(x + \delta) + b)}_2^2}
    {}{\Opt{min}{a, b \in R^{d+1}}{}{}}
    \addConstraint{\norm{\delta} \leq \xi}.
\end{maxi*}

\subsubsection{ERM}
Given samples $ S = \big((x_1, y_1), \ldots, (x_m, y_m)\big)$ the learner proceeds by finding $a, b$ that minimizes the quadratic empirical loss:

\begin{align*}
  &\Opt{min}{a,b \in R^d}{\frac{1}{2m}\sum_{i=1}^m{\norm{y_i - (a^Tx_i + b)}_2^2}}{}
\end{align*}

One approach to solving this optimization problem is to rely on gradient descent methods. Yet, an alternative is to use OLS procedure. For this, let us fist write the problem in matrix form:
\begin{equation*}
  \Opt{min}{\theta \in R^d}{\frac{1}{2m}{\norm{Y - X \theta}_2^2}}{}
\end{equation*}

with $X = [[x_1^T, 1], \ldots, [x_m^T, 1]]^T$, 
$Y = [y_1, \ldots, y_m]^T$ and
and  $\theta = [a^T, b]^T$.

Through OLS the optimizal solution is:
\begin{equation*}
  \theta^*  =
  \left\{
    \begin{aligned}
      (X^TX)^{-1}X^TY \  &\text{, if } X^TX \text{ is invertible}\\
      (X^TX)^\dagger X^TY \  &\text{, otherwise}
  \end{aligned}
  \right.
\end{equation*}

\subsubsection{AERM}

\begin{maxi*}|s|
    {\delta \in \mathbb{R}^d}
     {\norm{y_i - (a^T(x_i+ \delta) + b)}_2^2}
    {}{\Opt{min}{a, b \in R^{d+1}}{\frac{1}{m}\sum_{i=1}^m}{}}
    \addConstraint{\norm{\delta} \leq \xi}.
\end{maxi*}

The optimal $\delta^*$ that solves the inner maximization problem is the solution of either,
\begin{maxi*}|s|
    {\delta \in \mathbb{R}^d}
     {\norm{y_i - (a^T(x_i+ \delta) + b)}}
    {}{}
    \addConstraint{\norm{\delta} \leq \xi},
\end{maxi*}
or,
\begin{mini*}|s|
    {\delta \in \mathbb{R}^d}
     {\norm{y_i - (a^T(x_i+ \delta) + b)}}
    {}{}
    \addConstraint{\norm{\delta} \leq \xi},
\end{mini*}

and solving this is equivalent to solving the following problems:
\begin{maxi*}|s|
    {\delta \in \mathbb{R}^d}
     {-a^T \delta}
    {}{y_i - (a^Tx_i + b) +}
    \addConstraint{\norm{\delta} \leq \xi},
\end{maxi*}
\begin{mini*}|s|
    {\delta \in \mathbb{R}^d}
     {-a^T \delta}
    {}{y_i - (a^Tx_i + b) +}
    \addConstraint{\norm{\delta} \leq \xi}.
\end{mini*}

By the dual norm definition, $\norm{y}_* = \Opt{sup}{x}{y^Tx}{\norm{x} \leq 1}$, the above can be rewritten as:
\begin{equation*}
  y_i - (a^Tx_i + b) +\left\{
    \begin{aligned}
      \xi \norm{a}_* \\
      -\xi \norm{a}_*
    \end{aligned}
    \right.
  \end{equation*}

or, in a simple form:
  
\begin{equation*}
  y_i - (a^Tx_i + b) -\xi \norm{a}_*
\end{equation*}

As a result, the AERM becomes:

\begin{equation*}
  \Opt{min}
  {a, b \in \mathbb{R}^d}
  {\frac{1}{m}\sum_{i=1}^m
  \norm{y_i - (a^Tx_i + b + \xi \norm{a}_*)}_2^2}
  {}
\end{equation*}

\subsection{Example: logistic binary classifier}
\label{appendix:logistic_regression}

In this case $y \in \{0,1\}$. We will be working wiht the class of logistic regression models $\mathcal{H} = \{h_{a, b}: x \rightarrow 1/(1 + e^{-(a^T x + b)}) \
\big| \ a \in \mathbb{R}^d, b \in \mathbb{R} \}$.

\subsubsection{ERM}

In this case the learner wishes to find $a,b$ that minimizes the cross entropy,
\begin{mini*}|s|
    {a,b \in R^{d+1}}
     {\mathbb{E}_\mathbb{P} \big[ y f_{a, b}(x) + (1-y) g_{a,b}(x) \big]}
    {}{},
\end{mini*}
where,
\begin{equation*}
    f_{a,b}(x) = \log( 1 + e^{-(a^T x + b)} ),
\end{equation*}
\begin{equation*}
    g_{a,b}(x) = \log( 1 + e^{(a^T x + b)} ).
\end{equation*}.

The robust counterpart is,
\begin{maxi*}|s|
    {\delta \in \mathbb{R}^{d}}
     {\mathbb{E}_\mathbb{P} \big[y f_{a, b}(x + \delta) + (1-y) g_{a,b}(x + \delta) \big]}
    {}{\Opt{min}{a,b \in \mathbb{R}^{d+1}}{}{}}
    \addConstraint{\norm{\delta} \leq \xi}.
\end{maxi*}

\subsubsection{AERM}

\begin{maxi*}|s|
    {\delta \in \mathbb{R}^{d}}
     {y_i f_{a, b}(x_i + \delta) + (1-y_i) g_{a,b}(x_i + \delta)}
    {}{\Opt{min}{a,b \in \mathbb{R}^{d+1}}{\frac{1}{m}\sum_{i=1}^m}{}}
    \addConstraint{\norm{\delta} \leq \xi}.
\end{maxi*}

Let us focus first on the inner maximization problem,
\begin{maxi*}|s|
    {\delta \in \mathbb{R}^{d}}
     {y_i f_{a, b}(x_i + \delta) + (1-y_i) g_{a,b}(x_i + \delta)}
    {}{}
    \addConstraint{\norm{\delta} \leq \xi}.
\end{maxi*}

If $y_i = 1$, the maximization can be simplified to,
\begin{maxi*}|s|
    {\delta \in \mathbb{R}^d}
     {\log( 1 + e^{-(a^T (x_i + \delta) + b)} )}
    {}{}
    \addConstraint{\norm{\delta} \leq \xi}.
\end{maxi*}

The objective function in this case is monotonically decreasing, finding $\delta$ that solves the maximization is equivalent to finding $\delta$ that solves,
\begin{mini*}|s|
    {\delta \in \mathbb{R}^d}
     {(a^T(x_i+ \delta) + b)}
    {}{}
    \addConstraint{\norm{\delta} \leq \xi},
\end{mini*}
\begin{mini*}|s|
    {\delta \in \mathbb{R}^d}
     {a^T \delta}
    {}{a^Tx_i + b + }
    \addConstraint{\norm{\delta} \leq \xi}.
\end{mini*}

Note that, $\norm{y}_* = \Opt{sup}{x}{y^Tx}{\norm{x} \leq 1}$, by the definition of the dual norm.

\begin{mini*}|s|
    {\delta \in \mathbb{R}^d}
     {a^T \delta}
    {}{ -\xi \norm{a}_* = }
    \addConstraint{\norm{\delta} \leq \xi}.
\end{mini*}

The maximization becomes,
\begin{maxi*}|s|
    {\delta \in \mathbb{R}^d}
    {\log( 1 + e^{-(a^T (x_i + \delta) + b)} )}
    {}{}
    \addConstraint{\norm{\delta} \leq \xi}
\end{maxi*}
\begin{equation*}
    = \log(1 + e^{-(a^T x_i + b + \xi \norm{a}_*)}).
\end{equation*}

Now, if $y_i = 0$,
\begin{maxi*}|s|
    {\delta \in \mathbb{R}^d}
    {\log( 1 + e^{-(a^T (x_i + \delta) + b)} )}
    {}{}
    \addConstraint{\norm{\delta} \leq \xi}.
\end{maxi*}

In this case, the objective function is monotonically increasing, and proceeding on a similar manner, leads to solving,
\begin{maxi*}|s|
    {\delta \in \mathbb{R}^d}
    {(a^T(x_i+ \delta) + b)}
    {}{}
    \addConstraint{\norm{\delta} \leq \xi},
\end{maxi*}
\begin{maxi*}|s|
    {\delta \in \mathbb{R}^d}
    {a^T \delta}
    {}{a^Tx_i + b + }
    \addConstraint{\norm{\delta} \leq \xi}.
\end{maxi*}

And The maximization becomes,
\begin{maxi*}|s|
    {\delta \in \mathbb{R}^d}
    {\log( 1 + e^{(a^T (x_i + \delta) + b)} )}
    {}{}
    \addConstraint{\norm{\delta} \leq \xi}
\end{maxi*}
\begin{equation*}
    = \log(1 + e^{(a^T x_i + b + \xi \norm{a}_*)}).
\end{equation*}

Finally, the simplified form of the AERM is,
\begin{mini*}|s|
    {a,b \in \mathbb{R}^{d+1}}
    {\frac{1}{m}\sum_{i=1}^m y_i \log( 1 + e^{-(a^T x_i + b + \xi \norm{a}_*)} ) + \\
    (1-y_i) \log( 1 + e^{a^T x_i + b + \xi \norm{a}_*} )}
    {}{}
\end{mini*}

\section{SVM and margin theory}
\label{appendix:svm_and_margin_theory}

The SVM algorithm \cite{CortesVapnik1995} had a profound impact on machine learning theory and applications. It was firstly introduced to solve a binary classification problem. It aims at finding the linear hyperplane that maximizes the distance between the closest training samples of the two classes, reducing the generalization error. In addition to performing linear classification, SVMs can efficiently perform non-linear classification by using the so called "kernel approach", which involves mapping the inputs into a higher dimension space.

Initially designed for separable data (hard-margin SVM), it was later extended to handle non-separable classification problems (soft-margin SVM). The goal of the SVM algorithm is to find the hyperplane (parameterized through $w, b$) that maximizes the geometric margin (equivalent to minimizing $\norm{w}_2$), determined by the Euclidean distance from any point to the hyperplane. The hard-margin case is equivalent \cite{Book_ML_Mohri2018} to, 
\begin{mini*}|s|
    {w, b}
    {\frac{1}{2} \norm{w}_2^2}
    {}{}
    \addConstraint{y_i(w^Tx_i + b) \geq 1,\ \forall i \in 1 \ldots m}
\end{mini*}
For the non-separable case, one needs to introduce the slack variables $t_i$ and determine a trade-off between the margin maximization and the minimization of the slack variables penalty. In this case, the SVM takes the following form,
\begin{mini*}|s|
    {w, b, t_1, \ldots, t_m}
    {\frac{1}{2} \norm{w}_2^2 + \lambda \sum_{i=1}^{m} t_i^p}
    {}{}
    \addConstraint{y_i(w^Tx_i + b) \geq 1 - t_i}
    \addConstraint{t_i \geq 0,\ \forall i \in 1 \ldots m}{}.
\end{mini*}

When considering a binary classification problem, we have two approaches for the classifier $h^\prime$. One approach is to use $h^\prime(x) = \text{sign}(h(x))$, where $h(x) = w^T x + b$. Another approach is to consider $h^\prime(x, y) = y h(x)$, known as confidence margin. In this case, a correct classification by $h$ occurs when $h^\prime(x,y) > 0$, signifying that $x$ is classified correctly. Notably, the magnitude of $h(x)$ can be interpreted as the level of confidence in the prediction made by $h$.

\subsection{Similarities to SVM}
\label{appendix:similarities_to_svm}
Under similar assumptions, but taking the $L_2$ norm instead of the $L_\infty$ norm, that is, $\norm{x^\prime}_2 \leq u = r = 1$, $\norm{w}_2 \leq v = 1$ and assuming that $\xi = r = 1$, the same sample complexity guarantees work for,
\begin{mini*}|s|
    {w, t_1,\ldots, t_m}
    {\frac{1}{m} \sum_{i=1}^m t_i}
    {}{}
    \addConstraint{y_iw^Tx_i^\prime \ge \xi (2 - t_i)}
    \addConstraint{\ t_i \ge 0,\forall i \in 1, \ldots, m}
    \addConstraint{ \norm{w}_2 \leq 1},
\end{mini*}
which is equivalent to, 
\begin{mini*}|s|
    {w, t_1,\ldots, t_m}
    {\nu \norm{w}_2 + \frac{1}{m} \sum_{i=1}^m t_i}
    {}{}
    \addConstraint{y_iw^Tx_i^\prime \ge \xi (2 - t_i)}
    \addConstraint{\ t_i \ge 0,\forall i \in 1, \ldots, m},
\end{mini*}
with $\nu$ being a Lagrange variable. Note that this formulation is quite similar to the soft-margin version of the SVM,
\begin{mini*}|s|
    {w, b, t_1, \ldots, t_m}
    {\frac{1}{2} \norm{w}_2^2 + \lambda \sum_{i=1}^{m} t_i^p}
    {}{}
    \addConstraint{y_i(w^Tx_i + b) \geq 1 - t_i}
    \addConstraint{t_i > 0,\ \forall i \in 1 \ldots m}{}.
\end{mini*}

That being said, in another study \cite{Biggio2011}, the robustness of SVMs against adversarial data manipulation was examined. The authors considered a scenario where the adversary has control over training data and aims to tamper with the SVM learning procedure. They proposed a strategy based on kernel matrix correction to enhance the SVMs' robustness to such manipulation.
\section{Proofs of main results}
\label{appendix:proofs}

\subsection{Proof of Theorem \ref{theorem:main_theorem}}
\label{appendix:proofs:main_theorem}
Let us start by defining $\mathcal{H}^\prime = \{(x, y) \rightarrow yh(x),\ h \in \mathcal{H}\}$, and  $\mathcal{G} = \{\phi_{2,\zeta} \circ h^\prime,\ h^\prime \in \mathcal{H}^\prime\}$. By Theorem \ref{theorem:rademacher_bound}, we know that
\[
  \E{}{g(x)} \leq \Eemp{S}{g(s_i)} + 2\Rademacher{m}(\mathcal{G}) + \sqrt{\frac{\log{\frac{1}{\delta}}}{2m}},
\]
holds with probability at least $1-\delta$ for all $g \in \mathcal{G}$. This can be rewritten as, 
\begin{align*}
  \E{}{\phi_{2,\zeta}(y h(x))} &\leq \Eemp{S}{\phi_{2,\zeta}(y_ih(x_i))} + \\ 
  & 2\Rademacher{m}(\phi_{2,\zeta} \circ \mathcal{H}^\prime) + \sqrt{\frac{\log{\frac{1}{\delta}}}{2m}} \\
  \R{\mathrm{rob}}{h}{\zeta} &\leq \Eemp{S}{\phi_{2,\zeta}(y_ih(x_i))} + \\
  & 2\Rademacher{m}(\phi_{2,\zeta} \circ \mathcal{H}^\prime) + \sqrt{\frac{\log{\frac{1}{\delta}}}{2m}}.
\end{align*}
Through Talagrand's lemma, Lemma \ref{lemma:talagrand}, we can further upper
bound the Rademacher complexity and achieve,
\begin{equation}
   \R{\mathrm{rob}}{h}{\zeta} \leq \Eemp{S}{\phi_{2,\zeta}(y_ih(x_i))} + \frac{2}{\zeta}\Rademacher{m}(\mathcal{H}^\prime) + \sqrt{\frac{\log{\frac{1}{\delta}}}{2m}},
\end{equation}

with probability at least $1-\delta$, for a single, fixed $\zeta$ specified a-priori. 

Furthermore, note that the Rademacher complexity is in terms of $\mathcal{H}^\prime$, but we are interested in expressing it in terms of $\mathcal{H}$. To this end, it follows that the Rademacher complexity of $\mathcal{H}^\prime$ is equal to the Rademacher complexity of $\mathcal{H}$,
\begin{align*}
  \Rademacher{m}(\mathcal{H}^\prime) &= \E{\mathbb{P}^m}{
    \E{\mathbb{Q}^m}{
      \Opt{sup}
      {h^\prime \in \mathcal{H}^\prime}
      {\frac{1}{m} \sum_{i=1}^m \sigma_i h^\prime(s_i)}
      {}}} \\
  &=\E{\mathbb{P}^m}{
  \E{\mathbb{Q}^m}{
      \Opt{sup}
      {h \in \mathcal{H}}
      {\frac{1}{m} \sum_{i=1}^m \sigma_i y_ih(x_i)}
      {}}}\\
  &=\E{\mathbb{P}^m}{
  \E{\mathbb{Q}^m}{
      \Opt{sup}
      {h \in \mathcal{H}}
      {\frac{1}{m} \sum_{i=1}^m \sigma_i h(x_i)}
    {}}}\\
    &= \Rademacher{m}({\mathcal{H}}),
\end{align*}
where the third equality follows by the symmetry in $\sigma_i$ and because $y_i \in \{-1, 1\}$. 

As a consequence,
\begin{equation}
\label{equation:sample_complexit_bounded_by_rademacher}
  \R{\mathrm{rob}}{h}{\zeta} \leq \Eemp{S}{\phi_{2,\zeta}(y_ih(x_i))} + \frac{2}{\zeta}\Rademacher{m}(\mathcal{H}) + \sqrt{\frac{\log{\frac{1}{\delta}}}{2m}},
\end{equation}
holds, for all $h \in \mathcal{H}$, with probabiliy at least $1-\delta$, or, more precisely,
\begin{equation*}
  \mathbb{P}^m\Big(\R{\mathrm{rob}}{h}{\zeta}
  - \Eemp{S}{\phi_{2,\zeta}(y_ih(x_i))}
  \leq \epsilon + \frac{2}{\zeta}\Rademacher{m}(\mathcal{H}) \Big)
\end{equation*}
\begin{equation}
  \geq 1-\delta,
\end{equation}
which is equivalent to,
\begin{equation*}
  \mathbb{P}^m\Big(\R{\mathrm{rob}}{h}{\zeta}
  - \Eemp{S}{\phi_{2,\zeta}(y_ih(x_i))}
  > \epsilon + \frac{2}{\zeta}\Rademacher{m}(\mathcal{H}) \Big)
\end{equation*}
\begin{equation}
  \leq \delta,
\end{equation}

Notice that this holds for all $h \in \mathcal{H}$, and a given $\zeta$ specified beforehand. In particular it holds for $h$ that results in the supremum of the left hand side of the inequality.  Given that $\epsilon = \sqrt{\frac{\log{\frac{1}{\delta}}}{2m}}$, the right hand side be expressed just in terms of $\epsilon$,
\begin{equation*}
  \mathbb{P}^m\Big(
  \Opt{sup}
  {h \in \mathcal{H}}
  {\R{\mathrm{rob}}{h}{\zeta}}
  {}
  - \Eemp{S}{\phi_{2,\zeta}(y_ih(x_i))}
  > \epsilon + \frac{2}{\zeta}\Rademacher{m}(\mathcal{H}) \Big)
\end{equation*}
\begin{equation}
\label{equation:bound_exponential}
 \leq e^{-2m\epsilon^2}.
\end{equation}

In this part of the proof we generalize the result by showing that the bound holds uniformly for all $\zeta \in (0, r]$, with $r>0$, at the cost of an extra term in \eqref{equation:sample_complexit_bounded_by_rademacher}. 

Consider\footnote{This part follows closely the proof of Theorem 5.9 at \cite{Book_ML_Mohri2018}} now two sequences $\zeta_k$, $\epsilon_k$ with $\epsilon_k \in ]0,1]$. It follows that \eqref{equation:bound_exponential} holds for any fixed $k\geq1$:
\begin{equation*}
  \mathbb{P}^m\Big(
  \Opt{sup}
  {h \in \mathcal{H}}
  {\R{\mathrm{rob}}{h}{\zeta_k}}
  {}
  - \Eemp{S}{\phi_{2,\zeta_k}(y_ih(x_i))} 
\end{equation*}
\begin{equation*}
  > \epsilon_k + \frac{2}{\zeta_k}\Rademacher{m}(\mathcal{H}) \Big)  \leq e^{-2m\epsilon_k^2}.
\end{equation*}
However, the probability of the union (due to the supremum with respect to $k$) is bounded by the sum of the
probabilities, resulting in:
\begin{equation*}
  \mathbb{P}^m\Big(
  \Opt{sup}
  {k \geq 1, h \in \mathcal{H}}
  {\R{\mathrm{rob}}{h}{\zeta_k}}
  {}
  - \Eemp{S}{\phi_{2,\zeta_k}(y_ih(x_i))}
\end{equation*}
\begin{equation}
\label{equation:unif_convergence_zeta_epsilon}
  - \epsilon_k - \frac{2}{\zeta_k}\Rademacher{m}(\mathcal{H}) > 0 \Big) \leq \sum_{k \ge 1}e^{-2m\epsilon_k^2}.
\end{equation}
By choosing, 
\begin{equation}
    \label{equation:epsilon_k}
    \epsilon_k = \epsilon + \sqrt{\frac{\log k}{m}},
\end{equation}
the sum on the left-hand side of \eqref{equation:unif_convergence_zeta_epsilon} has an upper bound,
\begin{align*}
  \sum_{k \ge 1}e^{-2m\epsilon_k^2} &= \sum_{k \ge 1}e^{-2m(\epsilon + \sqrt{\frac{\log k}{m}})^2} \\
                              &\leq \sum_{k \ge 1}e^{-2m\epsilon^2} e^{-2\log k} \\
                              &= \sum_{k \ge 1}\frac{1}{k^2}e^{-2m\epsilon^2}\\
                              &\leq 2e^{-2m\epsilon^2}.
\end{align*}
Now, for $\gamma > 1$, choose $\zeta_k = \frac{r}{\gamma^k}$, and fix $\zeta_0 = r$. Then for any $\zeta \in ]0, r]$, $\exists k \ge 1$, s.t., $\zeta \in ]\zeta_k, \zeta_{k-1}]$. In addition, for this given $k$, $\zeta \leq \zeta_{k-1} = \gamma \zeta_k$, or, put differently, 
 \begin{equation}
     \label{equation:zeta_k_lowerbound}
     \zeta_k \geq \frac{\zeta}{\gamma}, 
 \end{equation}
 which means that $\zeta \leq \gamma \frac{r}{\gamma^k}$, and thus, 

 \begin{equation}
 \label{equation:log_k_upperbound}
     \sqrt{\log k} \leq \sqrt{\log \log_\gamma \frac{\gamma r}{\zeta}}.
 \end{equation}

Substituting \eqref{equation:epsilon_k}, \eqref{equation:zeta_k_lowerbound} and \eqref{equation:log_k_upperbound} into \eqref{equation:unif_convergence_zeta_epsilon} results in,
\begin{equation*}
  \mathbb{P}^m\Big(
  \Opt{sup}
  {\zeta \in ]0,r], h \in \mathcal{H}}
  {\R{\mathrm{rob}}{h}{\zeta}}
  {}
  - \Eemp{S}{\phi_{2,\zeta}(y_ih(x_i))}
\end{equation*}
\begin{equation}
  > \epsilon + \frac{2 \gamma}{\zeta}\Rademacher{m}(\mathcal{H}) + 
  \sqrt{\frac{\log \log_\gamma \frac{\gamma r}{\zeta}}{m}} \Big) \leq 2e^{-2m\epsilon^2},
\end{equation}

for any $r > 0$ and $\gamma >1 $. 

This is equivalent to,
\begin{equation*}
  \mathbb{P}^m\Big(
  \Opt{sup}
  {\zeta \in ]0,r], h \in \mathcal{H}}
  {\R{\mathrm{rob}}{h}{\zeta}}
  {}
  - \Eemp{S}{\phi_{2,\zeta}(y_ih(x_i))}
\end{equation*}
\begin{equation}
  \leq \sqrt{\frac{\log{\frac{2}{\delta}}}{2m}} + \frac{2 \gamma}{\zeta}\Rademacher{m}(\mathcal{H}) + \sqrt{\frac{\log \log_\gamma \frac{\gamma r}{\zeta}}{m}} \Big) \geq 1 - \delta,
\end{equation}

which concludes the proof.

\subsection{Proof of Lemma \ref{lemma:linear_classifier_rademacher_bound}}
\label{appendix:proofs:main_theorem:linear_classifier_rademacher_bound}
   \begin{align*}
    \EmpRademacher{S}(\mathcal{H}) &= \E{\mathbb{B}}{
      \Opt{sup}
      {h \in \mathcal{H}}
      {\frac{1}{m} \sum_{i=1}^m \sigma_i h(s_i)}
      {}} \\
    &= \frac{1}{m} \E{\mathbb{B}}{
      \Opt{sup}
      {h \in \mathcal{H}}
      {\sum_{i=1}^m \sigma_i w^T x_i^\prime}
      {\norm{w} \leq v}} \\
    &= \frac{1}{m} \E{\mathbb{B}}{
      \Opt{sup}
      {h \in \mathcal{H}}
      {w^T \sum_{i=1}^m \sigma_i x_i^\prime}
      {\norm{w} \leq v}} \\
    & (\text{by the definition of the dual norm}) \\
    &= \frac{1}{m} \E{\mathbb{B}}{
      v\norm{\sum_{i=1}^m \sigma_i x_i^\prime}_*} \\
    & (\text{by Jensen's inequality}) \\
    & \leq \frac{v}{m} \sqrt{\E{\mathbb{B}}{
      \norm{\sum_{i=1}^m \sigma_i x_i^\prime}_*^2}} \\
    & =  \frac{v}{m} \sqrt{\E{\mathbb{B}}{
      \norm{\sum_{i,j=1}^m \sigma_i \sigma_j x_i^\prime x_j^\prime}_*}} \\
    & (\text{because $\sigma_i$s are iid and follow a}\\
    & \text{symmetric Bernoulli distribution}) \\
    & =  \frac{v}{m} \sqrt{\E{\mathbb{B}}{
      \norm{\sum_{i}^m {x_i^\prime}^2}_*}}\\
    & \leq  \sqrt{\frac{v^2 u^2}{m}}
   \end{align*}

\subsection{Proof of Corollary \ref{corollary:linear_with_bounded}}
\label{appendix:proofs:main_theorem:linear_classifier_rademacher_bound:corollary}
  From theorem \ref{theorem:main_theorem} and lemma \ref{lemma:linear_classifier_rademacher_bound} we know that,
\begin{equation*}
  \R{\mathrm{rob}}{h}{\zeta} \leq \frac{1}{m} \sum_{i=1}^m \max \big(0, 2 - \frac{y_iw^Tx_i^\prime}{\zeta} \big) + \frac{2 \gamma}{\zeta}  \sqrt{\frac{v^2 u^2}{m}} 
\end{equation*}
\begin{equation*}
+ \sqrt{\frac{\log \log_\gamma \frac{\gamma r}{\zeta}}{m}} + \sqrt{\frac{\log{\frac{2}{\delta}}}{2m}},
\end{equation*}

holds with probability at least $1 - \delta$.

Notice that $h(x^\prime) = w^T x^\prime$ is continuous, and also
$\norm{h(x^\prime) - h(x_0^\prime)} \leq \norm{w^T} \norm{x^\prime -
  x_0^\prime}, \ \forall x, x_0 \in X$. In particular, given
$\xi > 0$, if $\norm{x - x_0} \leq \xi$, then
$\norm{h(x) - h(x_0)} \leq v \xi$. Taking $\zeta = v \xi$, concludes the proof.

\subsection{Proof of Corollary \ref{corollary:kernel}}
\label{appendix:proofs:lemma:kernel}
We know,  from (Theorem \ref{theorem:main_theorem}), that the following holds with probability at least $1-\delta$,

\begin{equation*}
    \R{\mathrm{rob}}{h}{\zeta} \leq \frac{1}{m} \sum_{i=1}^m \max \big(0, 2 - \frac{y_ih(x_i)}{\zeta} \big) + \frac{2 \gamma}{\zeta}\Rademacher{m}(\mathcal{H})
\end{equation*}
\begin{equation*}
 + \sqrt{\frac{\log \log_\gamma \frac{\gamma r}{\zeta}}{m}} + \sqrt{\frac{\log{\frac{2}{\delta}}}{2m}}.
\end{equation*}

The learners's goal is to solve the following minimization problem,
\begin{mini*}|s|
    {w \in \mathbb{H}}
    {\frac{1}{m} \sum_{i=1}^m \max \big(0, 2 - \frac{y_iw^T\psi(x_i)}{\zeta} \big)}
    {}{}
    \addConstraint{\norm{w}_\mathbb{H} \leq v},
\end{mini*}
but note that this problem can be written as follows by introducing a Lagrange variable $\lambda$,
\begin{mini*}|s|
    {w \in \mathbb{H}}
    {\frac{1}{m} \sum_{i=1}^m \max \big(0, 2 - \frac{y_iw^T\psi(x_i)}{\zeta} \big) - \lambda \norm{w}_\mathbb{H}}
    {}{}.
\end{mini*}

This form is of particular interest because, given the representer theorem (Theorem \ref{theorem:representer}), we know that the solution of this optimization has the form $w = \sum_{i=1}^{m} \alpha_i \psi(x_i)$, and hence, instead of solving the former optimization problem, in the Hilbert space $\mathbb{H}$, we can solve the following in the original space,
\begin{mini*}|s|
    {\alpha \in X^m}
    {\Biggl\{  \frac{1}{m} \sum_{i=1}^m \max \big(0, 2 - \frac{y_i }{\zeta} \sum_{j=1}^m \alpha_j \psi(x_j) \psi(x_i) \big) \\
    - \lambda \sqrt{\sum_{i,h=1}^m \alpha_i \alpha_j \psi(x_i) \psi(x_j)} \Biggl\}}
    {}{}
\end{mini*}
\begin{mini*}|s|
    {\alpha \in X^m}
    {\Biggl\{  \frac{1}{m} \sum_{i=1}^m \max \big(0, 2 - \frac{y_i }{\zeta} \sum_{j=1}^m \alpha_j K(x_i, x_j)\big) \\
    - \lambda \sqrt{\sum_{i,h=1}^m \alpha_i \alpha_j K(x_i, x_j)}   \Biggl\}}
    {}{}
\end{mini*}
\begin{mini*}|s|
    {\alpha \in X^m}
    {\Biggl\{  \frac{1}{m} \sum_{i=1}^m \max \big(0, 2 - \frac{y_i (\mathbf{K} \alpha)_i}{\zeta} \big) - \lambda \sqrt{ \alpha^T \mathbf{K} \alpha}  \Biggl\}}
    {}{}
\end{mini*}
\begin{mini*}|s|
    {\alpha \in X^m}
    {\frac{1}{m} \sum_{i=1}^m \max \big(0, 2 - \frac{y_i (\mathbf{K} \alpha)_i}{\zeta} \big)}
    {}{}
    \addConstraint{\alpha^T \mathbf{K} \alpha \leq v^2},
\end{mini*}
where $\mathbf{K} = [K(x_i, x_j)]_{ij}$ is the Gramiam matrix.

We start by first bounding the Rademacher complexity of the hypothesis class $\mathcal{H}$. This proof is very similar to the proof of \ref{lemma:linear_classifier_rademacher_bound}:
   \begin{align*}
    \EmpRademacher{S}(\mathcal{H}) &= \E{\mathbb{B}}{
      \Opt{sup}
      {h \in \mathcal{H}}
      {\frac{1}{m} \sum_{i=1}^m \sigma_i h(x_i)}
      {}} \\
    &= \frac{1}{m} \E{\mathbb{B}}{
      \Opt{sup}
      {h \in \mathcal{H}}
      {\sum_{i=1}^m \sigma_i w^T \psi(x_i)}
      {\norm{w}_\mathbb{H} \leq v}} \\
    &= \frac{1}{m} \E{\mathbb{B}}{
      \Opt{sup}
      {h \in \mathcal{H}}
      {w^T \sum_{i=1}^m \sigma_i \psi(x_i)}
      {\norm{w}_\mathbb{H} \leq v}} \\
      & (\text{by the definition of the dual norm}) \\
    &= \frac{1}{m} \E{\mathbb{B}}{
      v\norm{\sum_{i=1}^m \sigma_i \psi(x_i)}_{\mathbb{H}_*}} \\
      & (\text{Jensen's inequality}) \\
    & \leq \frac{v}{m} \sqrt{\E{\mathbb{B}}{
      \norm{\sum_{i=1}^m \sigma_i \psi(x_i)}_{\mathbb{H}_*}^2}} \\
    & =  \frac{v}{m} \sqrt{\E{\mathbb{B}}{
      \norm{\sum_{i,j=1}^m \sigma_i \sigma_j \psi(x_i) \psi(x_j)}_{\mathbb{H}_*}}} \\
      & (\text{because $\sigma$ is iid}) \\
    & =  \frac{v}{m} \sqrt{\E{\mathbb{B}}{
      \norm{\sum_{i}^m {\psi(x_i)}^2}_{\mathbb{H}_*}}} \\
      & (\text{because $\sigma$ is iid}) \\
    & =  \frac{v}{m} \sqrt{\E{\mathbb{B}}{
      \norm{\sum_{i}^m {K(x_i,x_i)}}_{\mathbb{H}_*}}} \\
    & \leq  \sqrt{\frac{v^2 u^2}{m}}.
   \end{align*}

This means, (Theorem \ref{theorem:main_theorem}), that the following holds with probability at least $1-\delta$,
\begin{equation*}
    \R{\mathrm{rob}}{h}{\zeta} \leq \frac{1}{m} \sum_{i=1}^m 
    \max \big(0, 2 - \frac{y_i (\mathbf{K} \alpha)_i}{\zeta} \big) + \frac{2 \gamma}{\zeta}\sqrt{\frac{v^2 u^2}{m}} +
\end{equation*}
\begin{equation*}
 \sqrt{\frac{\log \log_\gamma \frac{\gamma r}{\zeta}}{m}} + \sqrt{\frac{\log{\frac{2}{\delta}}}{2m}}.
\end{equation*}

\subsection{Proof of Theorem \ref{theorem:main_theorem_multiclass}}
\label{appendix:proofs:theorem:main_theorem_multiclass}
Let us first define $\mathcal{H}^\prime = \{(x, y) \rightarrow h(x,y) - \Opt{max}{y^\prime \neq y}{h(x, y ^\prime)}{},\ h \in \mathcal{H}\}$, and $\mathcal{G} = \{\phi_{2,\zeta} \circ h^\prime,\ h^\prime \in \mathcal{H}^\prime\}$, where $\phi_{2,\zeta}$ is a surrogate loss function.

We know, that,
\begin{equation*}
    \R{\mathrm{rob}}{h}{\zeta} \leq \Eemp{S}{\phi_{2,\zeta}(h(x_i,y_i) - \Opt{max}{y^\prime \neq y_i}{h(x_i, y ^\prime)}{} )} 
\end{equation*}
\begin{equation*}
+ \frac{2}{\zeta}\Rademacher{m}(\mathcal{H}^\prime) + \sqrt{\frac{\log{\frac{1}{\delta}}}{2m}},
\end{equation*}
and the $\Rademacher{m}(\mathcal{H}^\prime)$ is given by,
\begin{align*}
  \Rademacher{m}(\mathcal{H}^\prime) &= \E{\mathbb{P}^m}{
    \E{\mathbb{Q}^m}{
      \Opt{sup}
      {h^\prime \in \mathcal{H}^\prime}
      {\frac{1}{m} \sum_{i=1}^m \sigma_i h^\prime(s_i)}
      {}}} \\
  &=\mathbb{E}_{\mathbb{P}^m}\big[
        \mathbb{E}_{\mathbb{Q}^m}\big[
        \underset{\substack{h \in \mathcal{H}}}{\mathrm{sup}}
      \frac{1}{m} \sum_{i=1}^m \sigma_i h(x_i,y_i) \\
      & \quad \quad \quad \quad \quad \quad \quad \quad - \Opt{max}{y^\prime \neq y_i}{h(x_i, y ^\prime)}{}
      \big]
      \big]\\
  &=\E{\mathbb{P}^m}{
  \E{\mathbb{Q}^m}{
      \Opt{sup}
      {h \in \mathcal{H}}
      {\frac{1}{m} \sum_{i=1}^m \sigma_i h(x_i,y_i)}
      {}}} \\
  & \quad + \E{\mathbb{P}^m}{
  \E{\mathbb{Q}^m}{
      \Opt{sup}
      {h \in \mathcal{H}}
      {\frac{1}{m} \sum_{i=1}^m -\sigma_i \Opt{max}{y^\prime \neq y_i}{h(x_i, y ^\prime)}{}}
      {}}}\\ 
  &=\E{\mathbb{P}^m}{
  \E{\mathbb{Q}^m}{
      \Opt{sup}
      {h \in \mathcal{H}}
      {\frac{1}{m} \sum_{i=1}^m \sigma_i h(x_i,y_i)}
      {}}} \\
  & \quad + \E{\mathbb{P}^m}{
  \E{\mathbb{Q}^m}{
      \Opt{sup}
      {h \in \mathcal{H}}
      {\frac{1}{m} \sum_{i=1}^m \sigma_i \Opt{max}{y^\prime \neq y_i}{h(x_i, y ^\prime)}{}}
      {}}}\\ 
  &= \mathbb{E}_{\mathbb{P}^m}\big[
        \mathbb{E}_{\mathbb{Q}^m}\big[
            \underset{\substack{h \in \mathcal{H}}}{\mathrm{sup}}
            \frac{1}{m} \sum_{i=1}^m \sigma_i (h(x_i,y) \mathds{1}_{y=y_i} \\
            & \quad \quad \quad \quad \quad \quad \quad \quad + \Opt{max}{y}{h(x_i, y) \mathds{1}_{y \neq y_i}}{})
      \big]
    \big] \\  
  & \leq \E{\mathbb{P}^m}{
  \E{\mathbb{Q}^m}{
      \Opt{sup}
      {h \in \mathcal{H}}
      {\frac{1}{m} \sum_{i=1}^m \sigma_i k |h(x_i,y_i)| }
      {}}} \\
  & \text{(by Talagrand's lemma)}\\
  & \leq k \E{\mathbb{P}^m}{
  \E{\mathbb{Q}^m}{
      \Opt{sup}
      {h \in \mathcal{H}}
      {\frac{1}{m} \sum_{i=1}^m \sigma_i h(x_i,y_i) }
      {}}}\\
  & =  k \Rademacher{m}(\mathcal{H})
\end{align*}

And the bound follows by performing the same steps of the proof of Theorem \ref{theorem:main_theorem}.
\section{Numerical examples}
\label{appendix:numerical_examples}

The table below summarizes the number of training and test samples for various cases, impacting the reported empirical accuracy levels in the manuscript.
\begin{small}
\begin{center}
\begin{tabular}{lrr}
    Data & Training samples & Test samples\\
     \hline
    NIST 0/1 & 12665 & 2115\\
    NIST 3/8 & 11982 & 1984\\
    CIFAR Cat/Dog & 10000 & 2000\\
    CIFAR Dog/Airplane & 10000 & 2000
\end{tabular}
\end{center}
\end{small}
The code relies heavily on PyTorch and is publicly available at: 
\begin{itemize}
    \item[] \url{https://github.com/f2cf2e10/advML}
\end{itemize}

For FGSM and PGD training/attacks we used SGD as optimization procedure with batch size 100, 10 epochs and seed torch seed set to 171. For our proposed approach we use a convex optimization solver for LP.

\subsection{MNIST/CIFAR10 attacks}

In this study, we examine the impact of an FGSM adversary on clean images, using examples of the number 3 and number 8 from the NIST dataset and cats and dogs from the CIFAR10 dataset.
\begin{figure}[h!]
  \centering
  \includegraphics[width=0.15\textwidth]{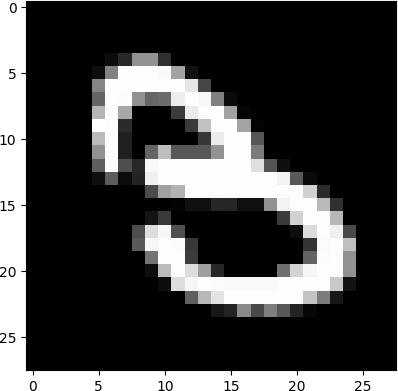}
  \includegraphics[width=0.15\textwidth]{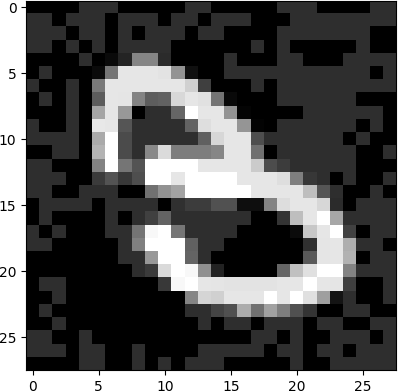}
  \includegraphics[width=0.15\textwidth]{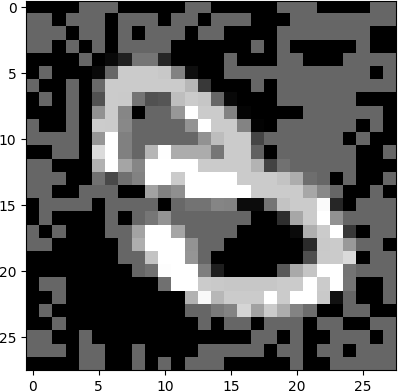}
  \caption{NIST 3:  clean (left), tampered $\xi=0.1$ (middle) and tampered $\xi=0.25$ (right)}
\end{figure}

\begin{figure}[ht!]
  \centering
  \includegraphics[width=0.15\textwidth]{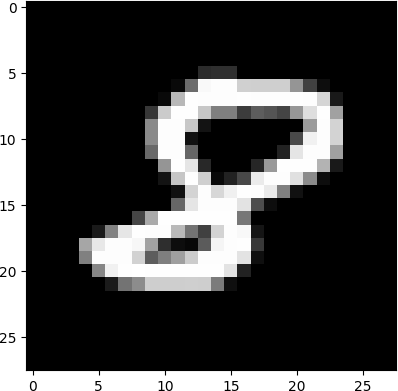}
  \includegraphics[width=0.15\textwidth]{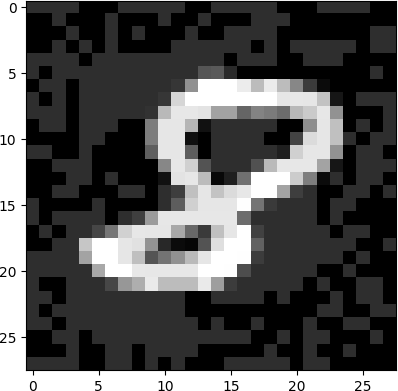}
  \includegraphics[width=0.15\textwidth]{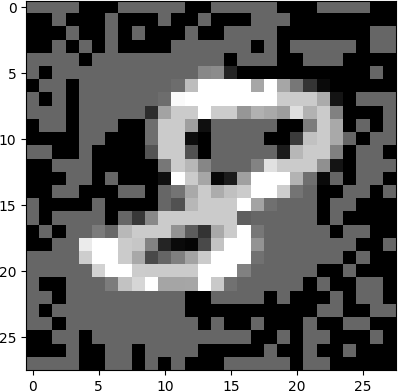}
  \caption{NIST 8:  clean (left), tampered $\xi=0.1$ (middle) and tampered $\xi=0.25$ (right)}
\end{figure}

 We compare the original clean images with their corresponding adversarial versions at two power levels, $\xi = 0.1$ and $\xi = 0.25$. The purpose of this visual analysis is to illustrate and explore how the FGSM perturbations alter the visual appearance of the images and discuss the implications of such attacks on image recognition systems.

\begin{figure}[h!]
  \centering
  \includegraphics[width=0.15\textwidth]{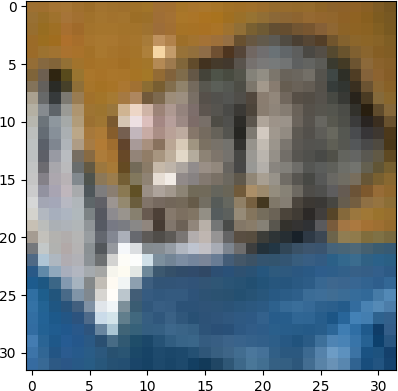}
  \includegraphics[width=0.15\textwidth]{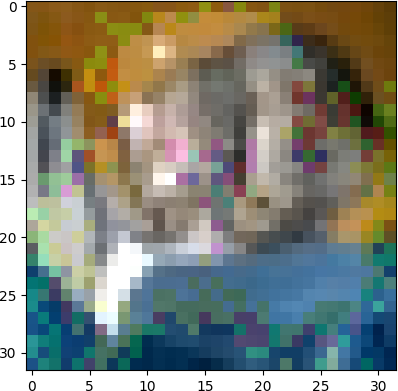}
  \includegraphics[width=0.15\textwidth]{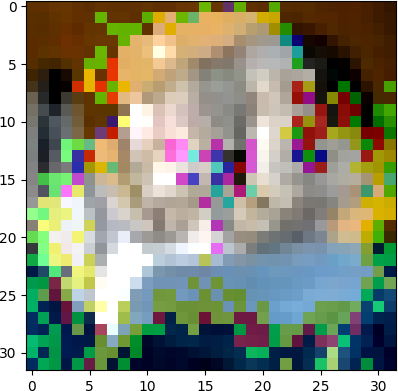}
  \caption{CIFAR10 cat:  clean (left), tampered $\xi=0.1$ (middle) and tampered $\xi=0.25$ (right)}
\end{figure}

\begin{figure}[h!]
  \centering
  \includegraphics[width=0.15\textwidth]{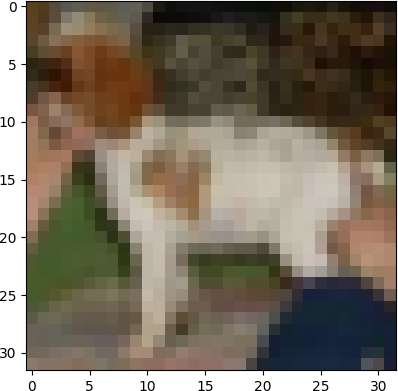}
  \includegraphics[width=0.15\textwidth]{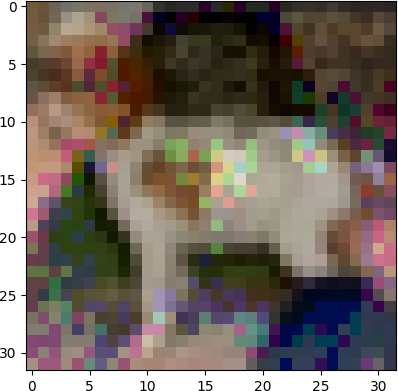}
  \includegraphics[width=0.15\textwidth]{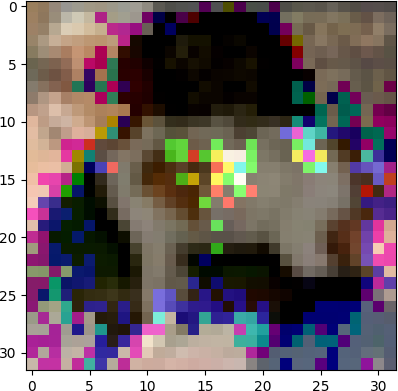}
  \caption{CIFAR10 dog:  clean (left), tampered $\xi=0.1$ (middle) and tampered $\xi=0.25$ (right)}
\end{figure}

\end{document}